\documentclass{article}

\usepackage{arxiv}
\usepackage[utf8]{inputenc}
\usepackage[T1]{fontenc}
\usepackage{amsmath,amssymb,amsthm,mathtools}
\usepackage{microtype}
\usepackage[round]{natbib}
\usepackage[colorlinks=true,citecolor=blue,linkcolor=blue,urlcolor=blue]{hyperref}

\newcommand{\Jac}{\texorpdfstring{\ensuremath{\operatorname{Jac}}}{Jac}}
\renewcommand{\headeright}{Preprint}
\renewcommand{\undertitle}{Preprint}
\renewcommand{\shorttitle}{Exponential Convex Calibration Dimension for Jaccard}

\newtheorem{theorem}{Theorem}[section]
\newtheorem{lemma}[theorem]{Lemma}
\newtheorem{corollary}[theorem]{Corollary}
\newtheorem{proposition}[theorem]{Proposition}
\theoremstyle{plain}
\newtheorem{remark}[theorem]{Remark}

\newcommand{\R}{\mathbb{R}}
\newcommand{\cY}{\mathcal{Y}}
\newcommand{\cA}{\mathcal{A}}
\newcommand{\cU}{\mathcal{U}}
\newcommand{\one}{\mathbf{1}}
\DeclareMathOperator{\rank}{rank}

\DeclareMathOperator{\affdim}{affdim}
\DeclareMathOperator{\aff}{aff}
\DeclareMathOperator{\CCdim}{CCdim}
\DeclareMathOperator{\Reg}{Reg}

\title{Exponential Convex Calibration Dimension for the Multi-Label Jaccard Measure}
\author{
  Mingyuan Zhang\\
  Independent Researcher\\
  \href{mailto:myz@alumni.upenn.edu}{myz@alumni.upenn.edu}
}
\date{}

\hypersetup{
  pdftitle={Exponential Convex Calibration Dimension for the Multi-Label Jaccard Measure},
  pdfauthor={Mingyuan Zhang},
  pdfsubject={Machine Learning},
  pdfkeywords={multi-label classification, Jaccard loss, intersection over union, convex calibration dimension, loss-matrix rank, approximate consistency, MinHash random features, F-measure}
}

\begin{document}
\maketitle

\begin{abstract}
The per-instance Jaccard score, or intersection over union (IoU), is standard
in multi-label classification and binary segmentation.  With $s$ labels, its
loss matrix has $2^s$ outcomes and reports.  Under the convention
$\Jac(\varnothing,\varnothing)=1$, we prove that the Jaccard score, shifted-loss,
and ordinary loss matrices are nonsingular and that the loss columns have
affine dimension $2^s-1$.  The proof combines a finite MinHash Gram
representation with Boolean M\"obius inversion.

For exact calibration, we prove
\[
  2^{s-1}
  \le \CCdim(L^{\mathrm{Jac}})
  \le 2^s-1.
\]
The lower bound uses a factorially weighted distribution with
$2^{s-1}+1$ supported outcomes and Bayes-optimal reports.  Consequently, every
exactly calibrated convex surrogate requires exponentially many prediction
coordinates.

We also give two polynomial-dimensional approximation guarantees with explicit
regret transfers.  A new $F_1$-to-Jaccard transfer turns an existing
$(s^2+1)$-dimensional $F_1$ surrogate into a polynomial-time rule with
asymptotic Jaccard regret at most $3-2\sqrt2$.  For any $\alpha>0$ and
$0<\rho<1$, a MinHash square-loss surrogate attains Jaccard-regret floor
$\alpha$ uniformly over arbitrary conditional label distributions.  With
probability at least $1-\rho$, the direct construction has dimension
$O((s^2+s\log(1/\rho))/\alpha^2)$, while a signed variant has dimension
$O((s+\log(1/\rho))/\alpha^2)$.  Thus zero-regret calibration requires
exponential dimension, whereas every fixed additive regret tolerance admits
polynomial prediction dimension.
\end{abstract}

\keywords{multi-label classification \and Jaccard loss \and intersection over union
  \and convex calibration dimension \and loss-matrix rank
  \and approximate consistency \and MinHash random features \and F-measure}

\section{Introduction}

In multi-label classification, an outcome and a prediction are subsets of a
ground set of $s$ labels.  The Jaccard score compares them by the size of their
intersection divided by the size of their union.  The same quantity is widely
called intersection over union (IoU) in image segmentation
\citep{berman2018lovasz}.  Its dependence on the entire predicted and true sets
makes the instance-wise loss nondecomposable across labels.  We study the
decision-theoretic setting in which this per-instance ratio is averaged under
a conditional label distribution, rather than a population-utility setting in
which confusion counts are averaged before a ratio is formed.

The output space contains $2^s$ sets, but an exponential output space alone
does not determine the dimension required by a statistically consistent
convex surrogate.  Some structured losses admit low-dimensional affine
representations or calibrated surrogates even when their report spaces are
exponential.  Convex calibration dimension formalizes the smallest Euclidean
dimension in which an exactly calibrated convex surrogate can exist
\citep{ramaswamy2012classification,ramaswamy2016ccdim}.

Our contributions are threefold.  First, we determine the exact ranks of the
Jaccard score, shifted-loss, and ordinary loss matrices, together with the
affine dimension of the loss columns.  The nonempty score matrix is known to be
strictly positive definite \citep{bouchard2013positive}; we give a
self-contained finite proof tailored to the power set using MinHash and
Boolean M\"obius inversion.  Under our empty-set convention, all three matrix
ranks equal $2^s$, and the affine dimension is $2^s-1$.  The resulting
affine-dimension bound gives an exactly calibrated surrogate in $2^s-1$
dimensions, although matrix rank alone cannot lower-bound arbitrary nonlinear
convex surrogates.

Second, we prove the exponential bounds
\[
  2^{s-1}
  \le \CCdim(L^{\mathrm{Jac}})
  \le 2^s-1.
\]
For the lower bound, fix one core label and assign each outcome containing
$d$ optional labels weight proportional to $1/d!$.  A combinatorial identity
makes all reports containing the core label tie, while adding the core label
strictly improves every nonempty report that omits it.  Mixing with the empty
outcome makes the empty report tie as well.  The supported outcomes and
Bayes-optimal reports then form the same $(2^{s-1}+1)$-element family.  Its
principal score submatrix is nonsingular, making the relevant two-sided
feasible subspace trivial and yielding the lower bound.  Hence the convex
calibration dimension for exact calibration is $\Theta(2^s)$.

Third, we give two complementary polynomial-dimensional approximation
guarantees.  Pointwise, $\Jac=F_1/(2-F_1)$, but expectation does not commute
with this nonlinear transformation, so $F_1$- and Jaccard-optimal reports need
not agree.  Under the same empty-set convention, the multi-label $F_1$ loss
has convex calibration dimension $\Theta(s^2)$ \citep{zhang2026f1rank}, and
quadratic-dimensional convex calibrated surrogates for $F_1$ are known
\citep{nowak2019sharp,zhang2020convex}.  Our regret transfer turns one such
surrogate into a polynomial-time Jaccard rule whose regret is at most
$3-2\sqrt2$ plus a term controlled by its $F_1$ regret.

Our second approximation route uses MinHash directly.  For every $\alpha>0$
and $0<\rho<1$, a finite MinHash feature map uniformly approximates the entire
Jaccard score matrix with probability at least $1-\rho$.  Regressing the
conditional feature mean with a convex square loss gives an
$\alpha$-approximately consistent surrogate: vanishing surrogate regret
guarantees asymptotic Jaccard regret at most $\alpha$.  A signed MinHash
variant has dimension $O((s+\log(1/\rho))/\alpha^2)$.  These are
prediction-dimension guarantees, not efficient-decoding results: the exact
link may still maximize over all $2^s$ reports, while a $\tau$-approximate
decoder increases the regret floor by at most $\tau$.  There is no conflict
with the exact-calibration lower bound, because these constructions allow a
positive tolerance and their dimension diverges as $\alpha\downarrow0$.

\section{Setup and calibration background}

Fix an integer $s\ge1$.  Let $[s]=\{1,\ldots,s\}$,
$\cY=2^{[s]}$, and $N=|\cY|=2^s$.  Outcomes and reports are denoted by
$A,B\in\cY$.  For any finite index set $I$, let
$\one_I\in\R^I$ denote the all-ones vector, and let $\delta_A$ denote the
point mass at $A\in\cY$.  Define the Jaccard score by
\begin{equation}\label{eq:jaccard-score}
  \Jac(A,B)
  =
  \begin{cases}
    \dfrac{|A\cap B|}{|A\cup B|},&A\cup B\ne\varnothing,\\[6pt]
    1,&A=B=\varnothing.
  \end{cases}
\end{equation}
Let $S\in\R^{N\times N}$ be the score matrix,
$S_{A,B}=\Jac(A,B)$, let $U=\one_N\one_N^\top$, and let
\begin{equation}\label{eq:loss-matrix}
  L:=L^{\mathrm{Jac}}=U-S
\end{equation}
be the Jaccard loss matrix.  Thus $L-U=-S$.

For a vector $x$ indexed by a finite set $I$ and $T\subseteq I$, $x_T$
denotes its restriction to $T$.  For a matrix $M$, $M_{T,R}$ denotes the
corresponding submatrix and $M_{\cdot,j}$ its column indexed by $j$.  For a
finite family $V\subseteq\R^r$, let
$\affdim(V)=\dim\aff(V)$.  For a matrix $M=[m_1\ \cdots\ m_k]$, define its
column-affine dimension by
\[
  \affdim(M)=\affdim\{m_1,\ldots,m_k\}.
\]

Let $\Delta_N=\{p\in\R_+^N:\one_N^\top p=1\}$, and write
$\operatorname{supp}(p)=\{A\in\cY:p_A>0\}$ for $p\in\Delta_N$.  The
conditional risk of report $B$ is $p^\top L_{\cdot,B}$, and
\[
  \operatorname{opt}_L(p)
  =\arg\min_{B\in\cY}p^\top L_{\cdot,B}
\]
is the set of Bayes-optimal reports.  Its trigger probability set is
\begin{equation}\label{eq:trigger-set}
  Q_B^L
  =\left\{p\in\Delta_N:
    p^\top L_{\cdot,B}\le p^\top L_{\cdot,B'}
    \text{ for every }B'\in\cY\right\}.
\end{equation}

For completeness, let $\mathcal C\subseteq\R^d$ be convex, let
$\psi:\mathcal C\to\R_+^N$ have convex coordinate functions, and let
$\operatorname{pred}:\mathcal C\to\cY$ be a link.  The pair
$(\psi,\operatorname{pred})$ is $L$-calibrated if, for every
$p\in\Delta_N$,
\[
  \inf_{\substack{u\in\mathcal C:\
       \operatorname{pred}(u)\notin\operatorname{opt}_L(p)}}
       p^\top\psi(u)
  >
  \inf_{u\in\mathcal C}p^\top\psi(u).
\]
The convex calibration dimension $\CCdim(L)$ is the smallest such $d$
\citep[Definitions~1 and~10]{ramaswamy2016ccdim}.

We use two general results.  First,
\begin{equation}\label{eq:ccdim-upper}
  \CCdim(L)\le\affdim(L).
\end{equation}
Second, for $p\in Q_B^L$,
\begin{equation}\label{eq:ccdim-lower-general}
  \CCdim(L)
  \ge \|p\|_0-\mu_{Q_B^L}(p)-1,
\end{equation}
where $\|p\|_0=|\operatorname{supp}(p)|$.  To define $\mu$, for
$Q\subseteq\Delta_N$ and $p\in Q$, let
\[
  \operatorname{dir}_Q(p)
  =\left\{v\in\R^N:
    p+\varepsilon v\in Q
    \text{ for all sufficiently small }\varepsilon>0\right\},
\]
and set
\[
  \operatorname{lin}_Q(p)
  =\operatorname{dir}_Q(p)\cap
    \bigl(-\operatorname{dir}_Q(p)\bigr),
  \qquad
  \mu_Q(p)=\dim\operatorname{lin}_Q(p).
\]
Equations~\eqref{eq:ccdim-upper} and~\eqref{eq:ccdim-lower-general} are
Theorems~12 and~16 of \citet{ramaswamy2016ccdim}.

\section{Related work}

\paragraph{Jaccard matrices and MinHash.}
The collision probability underlying MinHash is the Jaccard similarity
\citep{broder1997resemblance}.  Following an earlier positive-semidefiniteness
result of \citet{gower1971coefficient}, \citet{bouchard2013positive} proved
strict positive definiteness of the complete nonempty Jaccard index matrix by
an infinite positive-semidefinite decomposition.  We give an alternative
finite proof based on MinHash and Boolean M\"obius inversion, then use the
result to analyze the associated loss matrix.  We also subsample the same
Gram representation to obtain a uniformly accurate approximate factorization
and elicit its conditional mean with a convex square loss.

\paragraph{Instance-wise Jaccard prediction.}
\citet{dembczynski2012dependence} studied decision-theoretic risk minimization
for several multi-label losses and highlighted the difficulty of exact
Jaccard-risk minimization from an arbitrary conditional joint distribution.
For an empirical distribution, Bayes-optimal Jaccard prediction is the
Jaccard-median problem; \citet{chierichetti2010median} established its
computational hardness and gave approximation algorithms.  Although Jaccard
is a pointwise monotone transform of the $F_1$ score, the corresponding
expected-score maximizers need not agree \citep{waegeman2014bayes}.
If $\delta(P)$ denotes the Bayes-optimal expected $F_1$ score,
\citet[Theorem~3]{waegeman2014bayes} bound the Jaccard regret of an
$F_1$-optimal report by $1-\delta(P)/2$; this is an approximation guarantee,
not exact Jaccard consistency.  They take
$F_1(\varnothing,\varnothing)=1$ but
$\Jac(\varnothing,\varnothing)=0$, whereas our main convention aligns the two
empty-set values.  Under the aligned convention, we derive a
convention-matched uniform transfer and combine it with the
quadratic-dimensional $F_1$ surrogate of
\citet{zhang2020convex}.

This is the setting considered here: the Jaccard loss is evaluated separately
on each outcome--report pair and then averaged under a conditional
distribution.  It differs from population-confusion or expected-utility
formulations of micro- and instance-averaged linear-fractional metrics, for
which threshold and calibrated-utility methods are available
\citep{koyejo2015consistent,bao2020calibrated}.

\paragraph{Convex surrogates for IoU.}
The Lov\'asz hinge and Lov\'asz--Softmax losses provide tractable objectives
motivated by submodular losses and IoU
\citep{yu2015lovasz,berman2018lovasz}.  These objectives have been influential
in segmentation, but empirical usefulness does not imply calibration for the
finite instance-wise loss studied here.  Indeed,
\citet{finocchiaro2022structured} showed that the Lov\'asz hinge is
inconsistent for its intended structured target unless the underlying set
function is modular.  \citet{dai2023rankseg} derived an IoU-calibrated
ranking-based plug-in rule under conditional independence of the component
labels.  Their exact guarantee relies on conditional independence.  Our exact
lower bound allows arbitrary conditional dependence and concerns all convex
calibrated surrogates, independently of smoothness or polyhedrality.  Our
MinHash construction gives a distribution-free approximate guarantee without
an independence assumption.

\paragraph{Prediction dimension.}
Convex calibration dimension was introduced by
\citet{ramaswamy2012classification} and developed using trigger-set geometry
by \citet{ramaswamy2016ccdim}.  For comparison, under the aligned empty-set
convention, \citet{zhang2026f1rank} proved that the $F_1$ score, shifted-loss,
and ordinary loss matrices have rank $s^2-s+2$, that the loss columns have
affine dimension $s^2-s+1$, and that
$\CCdim(L^{F_1})=\Theta(s^2)$.  Thus exact $F_1$ prediction dimension is
quadratic, whereas the Jaccard bounds proved here are exponential.  The
present exact lower bound applies the feasible-subspace theorem of
\citet{ramaswamy2016ccdim} to a Jaccard-specific family of tied Bayes reports.
We are not aware of a previous exponential prediction-dimension lower bound
for the instance-wise Jaccard loss.  Convex calibration dimension is an exact
notion and does not itself lower-bound the dimension needed for a fixed
additive target-regret tolerance.

\section{Exact rank and affine dimension}

Let $\cY_+=\cY\setminus\{\varnothing\}$ and let
$K=S_{\cY_+,\cY_+}$ be the score matrix restricted to nonempty sets.

\begin{lemma}[Strict positive definiteness]\label{lem:jaccard-pd}
For every $s\ge1$, the matrix $K$ is positive definite.  Consequently,
$\rank(K)=2^s-1$.
\end{lemma}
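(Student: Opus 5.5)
We prove this via the MinHash representation advertised in the introduction, followed by Boolean M\"obius inversion. Let $\Pi$ be the set of all $s!$ linear orders (permutations) $\pi$ of $[s]$. For nonempty $A$, let $m_\pi(A)$ be the $\pi$-minimal element of $A$. The classical MinHash collision identity states that for nonempty $A,B$,
\[
  \Jac(A,B)=\frac{1}{s!}\sum_{\pi\in\Pi}\one\{m_\pi(A)=m_\pi(B)\}.
\]
To prove it, note that $m_\pi(A)=m_\pi(B)$ holds exactly when the $\pi$-minimal element of $A\cup B$ lies in $A\cap B$. Under a uniform random order, that minimal element is uniform on $A\cup B$, so the collision probability is $|A\cap B|/|A\cup B|$. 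Consequently,
\[
  K=\frac{1}{s!}\sum_{\pi\in\Pi}\sum_{j\in[s]}\phi_{\pi,j}\phi_{\pi,j}^\top,\qquad
  (\phi_{\pi,j})_A=\one\{m_\pi(A)=j\},
\]
which is a finite Gram representation. Positive semidefiniteness follows immediately.

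For definiteness, suppose $x\in\R^{\cY_+}$ satisfies $x^\top Kx=0$. Then $\sum_{A:\,m_\pi(A)=j}x_A=0$ for every $\pi$ and every $j$. We aim to deduce $x=0$. The idea is to sum over $j$ in suitable subsets, and then use the freedom in $\pi$, to obtain the ``upper'' sums
\[
  g(T):=\sum_{A\supseteq T}x_A,
\]
or equivalently sums over sets contained in a given set, vanishing for every nonempty $T$ or every relevant index. Once such a family of sums is shown to vanish, Boolean M\"obius inversion over the lattice $\cY$ gives $x=0$.

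The concrete route is as follows. Fix a set $C\subsetneq[s]$ and an element $j\notin C$, and choose $\pi$ that lists $C$ first (in any order), then $j$, then the rest. The event $m_\pi(A)=j$ is then exactly the event $j\in A$ and $A\cap C=\varnothing$. So for every such pair $(C,j)$,
\[
  h(C,j):=\sum_{A:\;j\in A,\;A\cap C=\varnothing}x_A=0.
\]
Now define $f(D)=\sum_{\varnothing\ne A\subseteq D}x_A$ for $D\subseteq[s]$. Taking $D=[s]\setminus C$, we have $f(D)-f(D\setminus\{j\})=h(C,j)=0$ for every $j\in D$. By induction on $|D|$ starting from $f(\varnothing)=0$, this gives $f(D)=0$ for all $D$. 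M\"obius inversion on the Boolean lattice, $x_A=\sum_{D\subseteq A}(-1)^{|A\setminus D|}f(D)$, then yields $x_A=0$ for all nonempty $A$. Hence $K$ is positive definite, and $\rank(K)=|\cY_+|=2^s-1$.

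The main obstacle is the step that extracts enough linear constraints from the Gram kernel. The key choice is to let $\pi$ place an arbitrary set $C$ first, so that a single collision class becomes the event ``$j\in A$ and $A$ avoids $C$''; that makes the telescoping in $f$ work. The remaining ingredients, the collision identity and the inversion formula, are routine.
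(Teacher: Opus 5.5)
Your proof is correct and follows essentially the same route as the paper: the same finite MinHash Gram representation, the same choice of permutation (your $C$ is the paper's $[s]\setminus(T\cup\{j\})$, so the collision class at $j$ is $\{\{j\}\cup R:R\subseteq T\}$), and Boolean M\"obius inversion. The only difference is bookkeeping. The paper inverts $T\mapsto\sum_{R\subseteq T}x_{\{j\}\cup R}$ separately for each $j$, while you first telescope to show every down-set sum $f(D)$ vanishes and then invert once; incidentally, your version needs only one admissible $j$ per nonempty $D$.
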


\noindent
The proof is given in Appendix~\ref{app:proof-pd}.  It writes $K$ as a MinHash
Gram matrix and uses Boolean M\"obius inversion to show that the Gram features
have no nontrivial common null vector.

\begin{theorem}[Exact Jaccard ranks]\label{thm:exact-rank}
For every $s\ge1$, under the convention in~\eqref{eq:jaccard-score},
\[
  \rank(S)=\rank(L-U)=\rank(L)=2^s.
\]
Moreover,
\[
  \affdim(L)=2^s-1.
\]
\end{theorem}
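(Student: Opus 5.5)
Given Lemma~\ref{lem:jaccard-pd}, the rank of $S$ follows from its block structure. Order $\cY$ with $\varnothing$ first. By~\eqref{eq:jaccard-score}, $\Jac(\varnothing,B)=0$ for every nonempty $B$, since then $|\varnothing\cap B|=0$ and $|\varnothing\cup B|=|B|>0$. Also $\Jac(\varnothing,\varnothing)=1$. Hence
\[
  S=\begin{pmatrix}1&0\\0&K\end{pmatrix},
\]
which is positive definite because $K$ is. So $\rank(S)=2^s$, and since $L-U=-S$, also $\rank(L-U)=2^s$.

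For $\rank(L)$, we use $L=U-S=\one\one^\top-S$ with $S$ invertible. The matrix determinant lemma gives
\[
  \det(\one\one^\top-S)=\det(-S)\bigl(1-\one^\top S^{-1}\one\bigr).
\]
So $L$ is nonsingular iff $\one^\top S^{-1}\one\ne1$. We compute this quantity from the block form: $\one^\top S^{-1}\one=1+\one^\top K^{-1}\one$. Because $K$ is positive definite, $K^{-1}$ is too, so $\one^\top K^{-1}\one>0$. Hence $\one^\top S^{-1}\one>1$, and $\rank(L)=2^s$. This step is where the convention $\Jac(\varnothing,\varnothing)=1$ matters: it contributes exactly the leading $1$, and strict definiteness of $K$ then pushes the sum strictly past $1$.

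For the affine dimension, note that $\affdim(L)$ is the rank of the differences $L_{\cdot,B}-L_{\cdot,B_0}$. This rank equals $\rank([L;\one^\top])-1$, where $[L;\one^\top]$ is $L$ with a row of ones appended. Since $\rank(L)=2^s$ and $[L;\one^\top]$ has only $N=2^s$ columns, the augmented matrix has rank exactly $N$. Therefore $\affdim(L)=2^s-1$. Equivalently, $2^s$ linearly independent columns are affinely independent, so their affine hull has dimension $2^s-1$.

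The main obstacle is the nonsingularity of $L$, because a rank-one perturbation can drop rank. It is settled by the sign argument on $\one^\top S^{-1}\one$ above; everything else is bookkeeping on top of Lemma~\ref{lem:jaccard-pd}.
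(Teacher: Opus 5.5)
Your proof is correct, and it differs from the paper's mainly in how you show that $L$ is nonsingular.

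The paper writes $L$ in block form and takes a null vector $(\alpha,x)$. This reduces to $\one_{N-1}^\top x=0$ and $Kx=\alpha\one_{N-1}$. It then uses the explicit identity $Ke=\one_{N-1}$, where $e$ indicates the singletons, which holds because $\sum_j\Jac(A,\{j\})=1$ for every nonempty $A$. This gives $x=\alpha e$, hence $\alpha s=0$. You instead apply the matrix determinant lemma to $L=\one\one^\top-S$, and you only need the sign of $\one^\top K^{-1}\one$, which positive definiteness of $K$ supplies.

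Your route is more structural. It works for any positive definite $K$ placed in the block form $\operatorname{diag}(1,K)$, and it needs no Jaccard-specific computation beyond Lemma~\ref{lem:jaccard-pd}.

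The paper's route is more explicit. The singleton identity gives $K^{-1}\one_{N-1}=e$, so $\one^\top S^{-1}\one=1+s$ exactly. Plugged into your formula, this yields $\det L=-s\det(-S)$, and the paper's argument avoids the determinant lemma altogether.

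For the affine dimension, the paper uses the independent score columns and transfers the result through the affine map $z\mapsto\one_N-z$. You use independence of the loss columns directly via the augmented matrix. The two arguments are equivalent and equally short.
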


\noindent\emph{Proof sketch.}
Ordering the empty set first gives $S=\operatorname{diag}(1,K)$, so
Lemma~\ref{lem:jaccard-pd} makes $S$ and $L-U=-S$ nonsingular.  Let
$e\in\R^{2^s-1}$ indicate the singleton sets.  For every nonempty $A$,
\[
  (Ke)_A=\sum_{j=1}^s\Jac(A,\{j\})=1,
\]
so $Ke=\one_{N-1}$.  The block kernel equations for $L$ then force every null
vector to vanish, proving that $L$ is nonsingular.  Finally, the $2^s$ score
columns are linearly independent and hence have affine dimension $2^s-1$;
the invertible affine map $x\mapsto\one_N-x$ transfers this dimension to the
loss columns.  The full proof is in Appendix~\ref{app:proof-rank}.

\begin{corollary}[Calibration-dimension upper bound]\label{cor:upper}
For every $s\ge1$,
\[
  \CCdim(L^{\mathrm{Jac}})\le2^s-1.
\]
\end{corollary}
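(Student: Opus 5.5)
The bound follows by chaining two facts already available: the general upper bound \eqref{eq:ccdim-upper}, $\CCdim(L)\le\affdim(L)$ (Theorem~12 of \citet{ramaswamy2016ccdim}), and the identity $\affdim(L)=2^s-1$ from Theorem~\ref{thm:exact-rank}. Substituting the latter into the former gives $\CCdim(L^{\mathrm{Jac}})\le 2^s-1$ for every $s\ge1$.

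Before invoking \eqref{eq:ccdim-upper}, I will check that $L$ meets its hypotheses. The cited theorem is stated for loss matrices with nonnegative entries, and $L_{A,B}=1-\Jac(A,B)\in[0,1]$. We also need $\affdim$ to be computed on the columns of $L$ indexed by reports, which is exactly how it is defined in Section~2.

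For completeness, I will also sketch the construction behind \eqref{eq:ccdim-upper}, so the bound is not purely a citation. Write each loss column as $L_{\cdot,B}=c+Mv_B$, where $c=L_{\cdot,\varnothing}$, $M\in\R^{N\times(2^s-1)}$ has columns spanning the direction space of $\aff\{L_{\cdot,B}\}$, and $v_B\in\R^{2^s-1}$. The surrogate is a convex function on $\R^{2^s-1}$ that is linear in $p$, for example the square loss $\psi_A(u)=\|u-z_A\|^2$, where $z_A$ is chosen so that the minimizer of $p^\top\psi$ is a linear function of $p$ from which every report's risk $p^\top L_{\cdot,B}=p^\top c+(M^\top p)^\top v_B$ can be recovered. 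The link then picks a report minimizing the estimated risk, and calibration follows from continuity of that recovery map together with the finiteness of the report set.

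I expect no real obstacle here, since the nontrivial content (nonsingularity and the affine-dimension count) is already in Theorem~\ref{thm:exact-rank}. The only point needing care is the convention mismatch, namely that \citet{ramaswamy2016ccdim} use losses in $\R_+$ with reports as columns. Our $L$ fits this convention directly, so no shifting or transposition is required.
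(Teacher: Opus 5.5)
Your proposal is correct and is the same argument as the paper's: the corollary follows by substituting $\affdim(L)=2^s-1$ from Theorem~\ref{thm:exact-rank} into the general bound \eqref{eq:ccdim-upper}. Your optional sketch of the construction behind \eqref{eq:ccdim-upper} is also sound. Take the square loss with $z_A$ equal to the $A$-th row of $M$, so that the conditional minimizer is $M^\top p$, and use the link $\arg\min_B u^\top v_B$; this matches the paper's remark about estimating the conditional distribution and decoding by minimizing the estimated risk.
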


\noindent
This follows immediately from Theorem~\ref{thm:exact-rank} and
\eqref{eq:ccdim-upper}.  Equivalently, one may estimate an arbitrary
$2^s$-class conditional distribution in $2^s-1$ coordinates and decode by
minimizing the estimated Jaccard risk.

\begin{remark}[Alternative empty-set convention]\label{rem:empty-rank}
If instead $\Jac(\varnothing,\varnothing)=0$, then
\[
  \rank(S)=\rank(L-U)=2^s-1,
  \qquad
  \rank(L)=2^s,
  \qquad
  \affdim(L)=2^s-1.
\]
The verification is given in Appendix~\ref{app:empty-convention}.
\end{remark}

\section{An exponential calibration-dimension lower bound}

The lower-bound witness uses a factorial balancing identity.  For an integer
$n\ge0$, write $[n]=\{1,\ldots,n\}$, with $[0]=\varnothing$.

\begin{lemma}[Factorial balancing]\label{lem:factorial-balance}
For every $C\subseteq[n]$,
\begin{equation}\label{eq:factorial-identity}
  \sum_{D\subseteq[n]}
    \frac{1+|C\cap D|}{(1+|C\cup D|)\,|D|!}
  =
  \sum_{d=0}^n\binom nd\frac{1}{(d+1)!}.
\end{equation}
In particular, the left-hand side is independent of $C$.
\end{lemma}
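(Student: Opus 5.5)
The plan is to organize the left-hand side of \eqref{eq:factorial-identity} by the part of $D$ lying outside $C$. For that part, $|C\cup D|$ is fixed, so the awkward denominator $1+|C\cup D|$ can be cancelled by a telescoping identity in the part of $D$ inside $C$. Write $c=|C|$, and decompose each $D\subseteq[n]$ as $D=A\sqcup B$ with $A\subseteq C$ and $B\subseteq[n]\setminus C$. Put $a=|A|$ and $b=|B|$. Then $|C\cap D|=a$, $|C\cup D|=c+b$ and $|D|=a+b$. Hence the left-hand side equals
\[
  \sum_{B\subseteq[n]\setminus C}\frac{1}{1+c+b}\,
  \sum_{a=0}^{c}\binom ca\frac{1+a}{(a+b)!}.
\]

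The key step is the inner identity: for all integers $c,b\ge0$,
\[
  \sum_{a=0}^{c}\binom ca\frac{1+a}{(a+b)!}
  =(1+c+b)\sum_{a=0}^{c}\binom ca\frac{1}{(a+b+1)!}.
\]
To prove it, split the factor $1+c+b$ on the right as $(a+b+1)+(c-a)$. The first piece contributes $\sum_a\binom ca/(a+b)!$. For the second piece, use $\binom ca(c-a)=\binom c{a+1}(a+1)$ and reindex $a'=a+1$; this gives $\sum_{a'}\binom c{a'}a'/(a'+b)!$. The boundary terms vanish, since the $a=c$ term carries the factor $c-a=0$ and the $a'=0$ term carries the factor $a'=0$. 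Adding the two pieces yields $\sum_a\binom ca(1+a)/(a+b)!$, which is the left side. As a sanity check, the case $c=0$ reads $1/b!=(1+b)/(b+1)!$.

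Substituting the inner identity, the factor $1+c+b$ cancels. The left-hand side becomes $\sum_{B}\sum_{A\subseteq C}1/(|A|+|B|+1)!$, which is $\sum_{D\subseteq[n]}1/(|D|+1)!$. Grouping by $d=|D|$ gives $\sum_{d}\binom nd/(d+1)!$, the claimed right-hand side, and this is manifestly independent of $C$. I expect the only real obstacle to be finding the inner identity, that is, recognizing that the sum over $A$ is exactly $(1+|C\cup D|)$ times a sum with a shifted factorial. Once it is guessed, for instance from the case $c=0$, its verification is the short Pascal-type telescoping described above.
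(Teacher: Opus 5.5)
Your proposal is correct and matches the paper's proof essentially step for step. Both use the decomposition of $D$ into its parts inside and outside $C$, the same inner identity proved by splitting $1+c+b=(a+b+1)+(c-a)$ and using $(c-a)\binom ca=(a+1)\binom c{a+1}$, and the resulting cancellation. The only cosmetic difference is at the end: you regroup $\sum_{B}\sum_{A\subseteq C}1/(|A|+|B|+1)!$ directly as $\sum_{D\subseteq[n]}1/(|D|+1)!$, whereas the paper invokes Vandermonde's identity, which is the same count.
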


\noindent
The proof is a finite binomial calculation and is given in
Appendix~\ref{app:factorial-balance}.

\begin{theorem}[Exponential CC-dimension bounds]\label{thm:ccdim-bounds}
For every $s\ge1$,
\begin{equation}\label{eq:main-ccdim-bounds}
  2^{s-1}
  \le \CCdim(L^{\mathrm{Jac}})
  \le 2^s-1.
\end{equation}
Consequently,
$\CCdim(L^{\mathrm{Jac}})=\Theta(2^s)$.
\end{theorem}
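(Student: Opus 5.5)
The upper bound is Corollary~\ref{cor:upper}, so the work is the lower bound. I will apply \eqref{eq:ccdim-lower-general} to a distribution $p$ with $\|p\|_0=2^{s-1}+1$ and $\mu_{Q_B^L}(p)=0$. That gives $\CCdim\ge 2^{s-1}+1-0-1=2^{s-1}$.

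\textbf{Construction of $p$.} Fix the core label $s$ and set $n=s-1$. For $D\subseteq[n]$ put $q_D\propto 1/|D|!$ on the outcome $D\cup\{s\}$. For a report $B=C\cup\{s\}$ with $C\subseteq[n]$, the expected score under $q$ is exactly the left side of \eqref{eq:factorial-identity}, up to normalization. Lemma~\ref{lem:factorial-balance} therefore makes all $2^{n}$ such reports tie, with common value $\beta$.

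Next, for a nonempty $B=C\subseteq[n]$ omitting $s$, each outcome $D\cup\{s\}$ has $|C\cap D|<1+|C\cap D|$ and the same union size. Adding $s$ to the report thus strictly raises every term, so $B$ is strictly worse than $C\cup\{s\}$.

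Now mix: $p=(1-\lambda)q+\lambda\delta_\varnothing$. The empty report scores $\lambda$, since $\Jac(\varnothing,\varnothing)=1$ and it scores $0$ against nonempty outcomes. Reports containing $s$ score $(1-\lambda)\beta$. Nonempty reports omitting $s$ score strictly less than $(1-\lambda)\beta$. Choose $\lambda=\beta/(1+\beta)$, so that $\lambda=(1-\lambda)\beta$. Then $\operatorname{opt}_L(p)=\mathcal F:=\{\varnothing\}\cup\{C\cup\{s\}:C\subseteq[n]\}$, and $\operatorname{supp}(p)=\mathcal F$ as well, of size $2^{s-1}+1$. Take $B=\{s\}$, so that $p\in Q_B^L$.

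\textbf{Showing $\mu=0$.} Let $v\in\operatorname{lin}_Q(p)$. Since $p\pm\varepsilon v\in\Delta_N$, $v$ must be supported on $\mathcal F$ and satisfy $\one^\top v=0$. The reports $B'\in\mathcal F$ all tie at $p$ with $B$. The constraints $(p\pm\varepsilon v)^\top(L_{\cdot,B}-L_{\cdot,B'})\le0$ then force $v^\top S_{\cdot,B'}=v^\top S_{\cdot,B}$ for all $B'\in\mathcal F$, i.e.\ $S_{\mathcal F,\mathcal F}v_{\mathcal F}=c\,\one$ for some scalar $c$. Strictly suboptimal reports impose no constraint for small $\varepsilon$.

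By the block structure $S=\operatorname{diag}(1,K)$, the principal submatrix $S_{\mathcal F,\mathcal F}$ is $\operatorname{diag}(1,K_{\mathcal F_+,\mathcal F_+})$, which is positive definite by Lemma~\ref{lem:jaccard-pd}. Hence $v_{\mathcal F}=c\,S_{\mathcal F,\mathcal F}^{-1}\one$. Then $0=\one^\top v=c\,\one^\top S_{\mathcal F,\mathcal F}^{-1}\one$, and positive definiteness gives $\one^\top S_{\mathcal F,\mathcal F}^{-1}\one>0$. So $c=0$, hence $v=0$, and $\mu_{Q_B^L}(p)=0$.

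\textbf{Main obstacle.} The delicate part is confirming that the tie set is exactly $\mathcal F$, in particular the strict inequality for reports omitting $s$ and the computation of the empty report's value. The direction-space argument also needs care: the lin-space includes the affine constraint $\one^\top v=0$, which is what lets positive definiteness kill the constant $c$. The rest is a direct application of \eqref{eq:ccdim-lower-general}.
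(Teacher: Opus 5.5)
Your proposal is correct and follows the paper's proof essentially step for step. You use the same factorial witness built on a core label and mixed with the empty outcome at weight $\beta/(1+\beta)$, Lemma~\ref{lem:factorial-balance} for the ties, and Lemma~\ref{lem:jaccard-pd} to make the two-sided feasible subspace trivial. The differences are only cosmetic: you choose core label $s$ and base report $\{s\}$, and you eliminate the constant via $\one^\top S_{\mathcal F,\mathcal F}^{-1}\one>0$, whereas the paper identifies the loss-difference span with $p_{\cA}^{\perp}$ and then applies normalization.
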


\noindent\emph{Proof sketch.}
Fix a core label $1$ and put $O=[s]\setminus\{1\}$, so $|O|=s-1$.
Let
\[
  \cU=\{\{1\}\cup D:D\subseteq O\}.
\]
First define a full-support distribution $q$ on $\cU$ by assigning
$\{1\}\cup D$ probability proportional to $1/|D|!$.  For a report
$\{1\}\cup C$, its expected score under $q$ is the normalized left-hand side
of~\eqref{eq:factorial-identity}; hence all $2^{s-1}$ reports in $\cU$ tie.
Every nonempty report missing label $1$ is pointwise improved by adding it,
because all outcomes in $\cU$ contain label $1$.

Let $\kappa>0$ be the common score of the reports in $\cU$.  Mixing $q$ with
the empty outcome using weights $1/(1+\kappa)$ and
$\kappa/(1+\kappa)$, respectively, makes the empty report tie with every
report in $\cU$.  These are exactly the Bayes-optimal reports, and the support
is
\[
  \cA=\{\varnothing\}\cup\cU,
  \qquad |\cA|=2^{s-1}+1.
\]
The active score submatrix indexed by $\cA$ is
$\operatorname{diag}(1,S_{\cU,\cU})$ and is nonsingular by
Lemma~\ref{lem:jaccard-pd}.  Its columns therefore have affine dimension
$|\cA|-1=2^{s-1}$.  At the witness distribution, their difference span is the
entire hyperplane orthogonal to the positive probability vector.  Adding the
normalization constraint leaves no nonzero two-sided feasible direction, so
$\mu=0$.  Substitution into~\eqref{eq:ccdim-lower-general} gives the lower
bound $|\cA|-1=2^{s-1}$.  Corollary~\ref{cor:upper} gives the upper bound.
The detailed feasible-subspace argument is in Appendix~\ref{app:proof-ccdim}.

\begin{remark}[Size of the remaining gap]\label{rem:gap}
The bounds in~\eqref{eq:main-ccdim-bounds} differ by less than a factor of two.
Determining the exact convex calibration dimension, or improving either
constant, remains open.  The theorem rules out every polynomial-dimensional
convex surrogate that is exactly calibrated uniformly over all conditional
label distributions.  Section~\ref{sec:approximate-surrogates} shows that the
qualifier ``exactly'' is essential.
\end{remark}

\begin{remark}[Alternative empty-set convention]\label{rem:empty-ccdim}
Under $\Jac(\varnothing,\varnothing)=0$, the same factorial witness without
the empty-outcome mixture yields
\[
  2^{s-1}-1
  \le \CCdim(L^{\mathrm{Jac}})
  \le 2^s-1.
\]
Thus the exponential conclusion is unchanged.  See
Appendix~\ref{app:empty-convention}.
\end{remark}

\section{Polynomial-dimensional approximate surrogates}
\label{sec:approximate-surrogates}

The exponential lower bound in Theorem~\ref{thm:ccdim-bounds} concerns exact
calibration.  We now allow a nonzero, distribution-free target-regret floor,
combine a prior $F_1$ surrogate with a new regret transfer, and construct a
MinHash surrogate of polynomial prediction dimension.

We use both conditional and population regrets.  For a score
$G:\cY\times\cY\to[0,1]$, a distribution $p\in\Delta_N$, and a report
$B\in\cY$, define
\begin{equation}\label{eq:conditional-score-regret}
  r_G(p,B)
  =
  \max_{C\in\cY}\sum_{A\in\cY}p_A G(A,C)
  -
  \sum_{A\in\cY}p_A G(A,B).
\end{equation}
For a surrogate $\Psi:\cY\times\R^d\to\R_+$, define its conditional regret
by
\begin{equation}\label{eq:conditional-surrogate-regret}
  r_\Psi(p,u)
  =
  \sum_{A\in\cY}p_A\Psi(A,u)
  -
  \inf_{v\in\R^d}\sum_{A\in\cY}p_A\Psi(A,v).
\end{equation}
For a distribution $\mathcal D$ on $\mathcal X\times\cY$, let $p_x$ be the
conditional distribution of the outcome given $X=x$, and set
\begin{align}
  \Reg_G(h)
  &=
  \mathbb E_X\bigl[r_G(p_X,h(X))\bigr],
  \label{eq:population-score-regret}\\
  \Reg_\Psi(f)
  &=
  \mathbb E_{(X,A)\sim\mathcal D}[\Psi(A,f(X))]
  -
  \inf_g
  \mathbb E_{(X,A)\sim\mathcal D}[\Psi(A,g(X))],
  \label{eq:population-surrogate-regret}
\end{align}
where the infimum is over all measurable $g:\mathcal X\to\R^d$.  For
$\alpha\ge0$, we call a surrogate--link pair
\emph{$\alpha$-approximately consistent} for Jaccard if, for every
$\mathcal D$ and every sequence $(f_n)$,
\begin{equation}\label{eq:approximate-consistency-definition}
  \Reg_\Psi(f_n)\longrightarrow0
  \quad\Longrightarrow\quad
  \limsup_{n\to\infty}
  \Reg_{\Jac}(\operatorname{pred}\circ f_n)
  \le\alpha.
\end{equation}
For both surrogates below, the infimum in
\eqref{eq:population-surrogate-regret} decomposes pointwise, and hence
\begin{equation}\label{eq:population-conditional-decomposition}
  \Reg_\Psi(f)=\mathbb E_X\bigl[r_\Psi(p_X,f(X))\bigr].
\end{equation}

\subsection{An \texorpdfstring{$F_1$}{F1}-proxy surrogate}
\label{sec:f1-proxy}

Define the instance-wise $F_1$ score using the same empty-set convention as
in~\eqref{eq:jaccard-score}:
\begin{equation}\label{eq:f1-score-approx}
  F(A,B)
  =
  \begin{cases}
    \dfrac{2|A\cap B|}{|A|+|B|},&|A|+|B|>0,\\[6pt]
    1,&A=B=\varnothing.
  \end{cases}
\end{equation}
Pointwise,
\begin{equation}\label{eq:jaccard-f-transform}
  \Jac(A,B)=g(F(A,B)),
  \qquad
  g(t)=\frac{t}{2-t}.
\end{equation}

\begin{proposition}[\texorpdfstring{$F_1$}{F1}-to-Jaccard regret transfer]
\label{prop:f1-jaccard-transfer}
Let $c_\star=3-2\sqrt2$ and define $H:[0,1]\to[0,1]$ by
\begin{equation}\label{eq:f1-jaccard-transfer-function}
  H(r)
  =
  \begin{cases}
    c_\star+r,&0\le r\le\sqrt2-1,\\[2mm]
    \dfrac{2r}{1+r},&\sqrt2-1\le r\le1.
  \end{cases}
\end{equation}
Then, for every distribution $\mathcal D$ and every classifier
$h:\mathcal X\to\cY$,
\begin{equation}\label{eq:f1-jaccard-global-transfer}
  \Reg_{\Jac}(h)
  \le H\bigl(\Reg_F(h)\bigr)
  \le c_\star+\Reg_F(h).
\end{equation}
In particular, an $F_1$-Bayes classifier has Jaccard regret at most
$c_\star\approx0.1716$.
\end{proposition}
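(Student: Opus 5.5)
The plan is to prove a pointwise (conditional) version of the transfer for a fixed $p\in\Delta_N$ and then lift it to populations using concavity of $H$. Fix $p$ and a report $B$. Write $\Phi(C)=\sum_A p_A F(A,C)$, $F^\star=\max_C\Phi(C)$ and $r=r_F(p,B)=F^\star-\Phi(B)$. By~\eqref{eq:jaccard-f-transform}, $\Jac(A,C)=g(F(A,C))$ for all $A,C$. This includes the pair $(\varnothing,\varnothing)$, because $g(1)=1$; this is exactly where the aligned empty-set convention is used.

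Two elementary facts about $g$ on $[0,1]$ drive the argument. First, $g(t)\le t$, since $2-t\ge1$. Second, $g$ is convex, since $g''(t)=4/(2-t)^3>0$.

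The first fact gives an upper bound for the comparator:
\[
\max_C\sum_A p_A\Jac(A,C)\le\max_C\Phi(C)=F^\star .
\]
Jensen's inequality with convex $g$ gives a lower bound for the report:
\[
\sum_A p_A\Jac(A,B)\ge g(\Phi(B))=g(F^\star-r).
\]
Hence $r_{\Jac}(p,B)\le F^\star-g(F^\star-r)$. Since $F^\star$ is unknown, I will bound this by $\sup_{x\in[r,1]}\bigl(x-g(x-r)\bigr)$.

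This one-variable maximization is the main computational step, and it should reproduce the two branches of $H$. The derivative of the objective is $1-2/(2-x+r)^2$, which vanishes at $x-r=2-\sqrt2$. The objective is concave in $x$, so this stationary point is the maximizer whenever it is feasible.
\begin{itemize}
\item If $r\le\sqrt2-1$, the point $x=2-\sqrt2+r\le1$ is feasible. Using $g(2-\sqrt2)=\sqrt2-1$, the value is $2-\sqrt2+r-(\sqrt2-1)=c_\star+r$.
\item Otherwise the maximum is at the endpoint $x=1$. The value is $1-g(1-r)=1-\frac{1-r}{1+r}=\frac{2r}{1+r}$.
\end{itemize}
This gives the conditional bound $r_{\Jac}(p,B)\le H(r_F(p,B))$.

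For the population statement, I will check the following properties of $H$. It is continuous at $\sqrt2-1$, since $2(\sqrt2-1)/\sqrt2=2-\sqrt2=c_\star+\sqrt2-1$. It is nondecreasing. It is concave: on the first piece the slope is $1$; on the second piece the slope is $2/(1+r)^2$, which equals $1$ at $r=\sqrt2-1$ and decreases afterwards.

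Applying the conditional bound at $p_X$ with report $h(X)$ and taking expectations gives
\[
\Reg_{\Jac}(h)\le\mathbb E_X\bigl[H(r_F(p_X,h(X)))\bigr]\le H(\Reg_F(h))
\]
by Jensen's inequality for concave $H$. The second inequality $H(r)\le c_\star+r$ holds with equality on the first piece. On the second piece it follows because a concave function lies below its tangent line at $\sqrt2-1$, and that tangent line is $c_\star+r$.

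The $F_1$-Bayes claim is the case $\Reg_F(h)=0$, which gives $H(0)=c_\star$. I also need to note that $H$ maps into $[0,1]$, which is immediate from $2r/(1+r)\le1$. Measurability of $x\mapsto r_F(p_x,h(x))$ is routine, because $\cY$ is finite.
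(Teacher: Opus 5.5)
Your proposal is correct and follows the paper's proof essentially step for step: $g(t)\le t$ bounds the comparator, Jensen with convex $g$ bounds the report, a one-variable maximization recovers the two branches of $H$, and Jensen with concave $H$ gives the population statement. The differences are cosmetic. You maximize over $x=F^\star\in[r,1]$ using the derivative and verify $H(r)\le c_\star+r$ via the tangent line at $\sqrt2-1$. The paper instead substitutes $z=\delta-r\in[0,1-r]$, uses unimodality of $a(z)=z(1-z)/(2-z)$, and bounds $a\le c_\star$ directly.
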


\noindent
The proof is given in Appendix~\ref{app:proof-f1-transfer}.

Let $(\Psi_F,\operatorname{pred}_F)$ denote the
$(s^2+1)$-dimensional convex surrogate and polynomial-time link for $F_1$ of
\citet{zhang2020convex}.  Their calibration and regret-transfer results imply
that, for every distribution $\mathcal D$ and every sequence $(f_n)$,
\[
  \Reg_{\Psi_F}(f_n)\longrightarrow0
  \quad\Longrightarrow\quad
  \Reg_F(\operatorname{pred}_F\circ f_n)\longrightarrow0;
\]
see \citet[Theorems~2 and~5]{zhang2020convex}.  Proposition
\ref{prop:f1-jaccard-transfer} therefore immediately gives
\begin{equation}\label{eq:f1-proxy-jaccard-consistency}
  \limsup_{n\to\infty}
  \Reg_{\Jac}(\operatorname{pred}_F\circ f_n)
  \le c_\star.
\end{equation}
Thus their surrogate--link pair is $c_\star$-approximately consistent for
Jaccard.  We refer to \citet{zhang2020convex} for the construction, decoder,
and proof.  Its quadratic order is necessary for exact $F_1$ calibration
\citep{zhang2026f1rank}; after the transfer above, however, it provides only a
constant-floor approximate guarantee for Jaccard.

Proposition~\ref{prop:f1-jaccard-transfer} is a convention-matched refinement
of \citet[Theorem~3]{waegeman2014bayes}.  For
$\delta(p)=\max_C\mathbb E_p[F(A,C)]$, their coarse comparison gives an
$F_1$-optimal report $B_F$ the bound
$r_{\Jac}(p,B_F)\le1-\delta(p)/2$.  Under the aligned empty-set convention,
the exact identity $\Jac=g(F)$ and Jensen's inequality instead give, for any
report $B$ with $F_1$ regret $r$,
\[
  r_{\Jac}(p,B)\le\delta(p)-g(\delta(p)-r)\le H(r).
\]
In particular,
\[
  r_{\Jac}(p,B_F)
  \le\frac{\delta(p)(1-\delta(p))}{2-\delta(p)}
  \le3-2\sqrt2,
\]
whereas the earlier bound is at least $1/2$.  Thus the proposition both
sharpens the exact-optimizer guarantee and covers approximate $F_1$
optimization; concavity of $H$ supplies the population transfer used above.
Because $F_1$- and Jaccard-optimal reports need not agree, the resulting
guarantee remains approximate rather than exact.

\subsection{A MinHash random-feature surrogate}
\label{sec:minhash-approximate-surrogate}

Write a permutation of $[s]$ as
$\pi=(\pi(1),\ldots,\pi(s))$.  For every nonempty $A\subseteq[s]$, its
MinHash value is the first element of $A$ in this order:
\[
  m_\pi(A)=\pi(k_A),
  \qquad
  k_A=\min\{k\in[s]:\pi(k)\in A\}.
\]
Introduce an additional symbol $\bot$ and extend this map to all
$A\in\cY$ by setting
\[
  \overline m_\pi(A)
  =
  \begin{cases}
    m_\pi(A),&A\ne\varnothing,\\
    \bot,&A=\varnothing,
  \end{cases}
\]
Under the convention $\Jac(\varnothing,\varnothing)=1$, a uniformly random
permutation then satisfies the MinHash identity for every $A,B\in\cY$:
\begin{equation}\label{eq:extended-minhash}
  \Jac(A,B)
  =
  \Pr_\pi\!\left(
    \overline m_\pi(A)=\overline m_\pi(B)
  \right).
\end{equation}

\begin{lemma}[Uniform finite-sample MinHash approximation]
\label{lem:uniform-minhash}
Let $\pi_1,\ldots,\pi_M$ be independent uniformly random permutations of
$[s]$, and let $\{e_j:j\in[s]\cup\{\bot\}\}$ be the standard basis of
$\R^{s+1}$.  Define
\begin{equation}\label{eq:minhash-feature-map}
  \Phi(A)
  =
  \frac{1}{\sqrt M}
  \bigl(
    e_{\overline m_{\pi_1}(A)},\ldots,
    e_{\overline m_{\pi_M}(A)}
  \bigr)
  \in\R^{M(s+1)}
\end{equation}
and
\begin{equation}\label{eq:empirical-minhash-score}
  \widetilde S(A,B)
  =
  \langle\Phi(A),\Phi(B)\rangle
  =
  \frac1M\sum_{r=1}^M
  \one\!\left\{
    \overline m_{\pi_r}(A)=\overline m_{\pi_r}(B)
  \right\}.
\end{equation}
For every $\eta>0$,
\begin{equation}\label{eq:minhash-uniform-probability}
  \Pr\!\left(
    \max_{A,B\in\cY}
    |\widetilde S(A,B)-\Jac(A,B)|>\eta
  \right)
  \le
  2\,4^s\exp(-2M\eta^2).
\end{equation}
Consequently, for $0<\rho<1$, the uniform error is at most $\eta$ with
probability at least $1-\rho$ whenever
\begin{equation}\label{eq:minhash-sample-size}
  M
  \ge
  \frac{2s\log2+\log(2/\rho)}{2\eta^2}.
\end{equation}
\end{lemma}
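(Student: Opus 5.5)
The plan is to combine Hoeffding's inequality with a union bound over the $4^s$ ordered pairs $(A,B)\in\cY\times\cY$.

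\emph{Inner-product identity.} Each block $e_{\overline m_{\pi_r}(A)}$ is a standard basis vector of $\R^{s+1}$. Hence $\langle e_{\overline m_{\pi_r}(A)},e_{\overline m_{\pi_r}(B)}\rangle=\one\{\overline m_{\pi_r}(A)=\overline m_{\pi_r}(B)\}$. Summing over the $M$ blocks with the factor $1/M$ from the normalization gives the second equality in~\eqref{eq:empirical-minhash-score}.

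\emph{Fixed pair.} Fix $A,B\in\cY$ and set $Z_r=\one\{\overline m_{\pi_r}(A)=\overline m_{\pi_r}(B)\}$. These are independent $\{0,1\}$-valued variables, because the permutations are independent. By the extended MinHash identity~\eqref{eq:extended-minhash}, each has mean $\Jac(A,B)$. This identity should be checked in its three cases:
\begin{itemize}
\item If both sets are nonempty, the first element of $A\cup B$ in the random order is uniform on $A\cup B$, and the MinHash values collide exactly when it lies in $A\cap B$.
\item If exactly one set is empty, one value is $\bot$ and the other is not, so the collision probability is $0=\Jac(A,B)$.
\item If both are empty, both values are $\bot$, so the collision probability is $1$, matching the convention.
\end{itemize}
Hoeffding's inequality then gives $\Pr(|\widetilde S(A,B)-\Jac(A,B)|>\eta)\le 2\exp(-2M\eta^2)$.

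\emph{Union bound and sample size.} There are $N^2=4^s$ ordered pairs, so the union bound yields~\eqref{eq:minhash-uniform-probability}. One could sharpen the count using symmetry and the diagonal, where $\widetilde S(A,A)=1=\Jac(A,A)$, but this is not needed. For~\eqref{eq:minhash-sample-size}, require $2\cdot4^s\exp(-2M\eta^2)\le\rho$. Taking logarithms, this is $2M\eta^2\ge 2s\log 2+\log(2/\rho)$, which is exactly the stated condition. Since the bound uses ``$>\eta$'', the complementary event has uniform error at most $\eta$ with probability at least $1-\rho$.

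No step is a real obstacle. The only care needed is the empty-set case of the MinHash identity, where the convention $\Jac(\varnothing,\varnothing)=1$ is essential.
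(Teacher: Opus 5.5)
Your proposal is correct and matches the paper's proof: Hoeffding's inequality for each fixed ordered pair (the collision indicators are independent Bernoulli variables with mean $\Jac(A,B)$ by the extended MinHash identity), then a union bound over the $4^s$ ordered pairs, then solving $2\cdot4^s\exp(-2M\eta^2)\le\rho$ for $M$. Your explicit three-case check of the extended identity, including the empty-set convention, is a small addition; the paper takes that identity as given from the main text.
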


\noindent
The proof is given in Appendix~\ref{app:proof-minhash-uniform}.

Fix a realization of the feature map and define the convex square-loss
surrogate and link by
\begin{align}
  \Psi_M(A,u)
  &=\|u-\Phi(A)\|_2^2,
  \qquad u\in\R^{M(s+1)},
  \label{eq:minhash-square-surrogate}\\
  \operatorname{pred}_M(u)
  &\in
  \arg\max_{B\in\cY}\langle u,\Phi(B)\rangle,
  \label{eq:minhash-link}
\end{align}
using any fixed rule to break ties.

\begin{theorem}[MinHash approximate consistency]
\label{thm:minhash-approximate-calibration}
On the uniform-approximation event in Lemma~\ref{lem:uniform-minhash}, for
every $p\in\Delta_N$ and every $u\in\R^{M(s+1)}$,
\begin{equation}\label{eq:minhash-conditional-transfer}
  r_{\Jac}(p,\operatorname{pred}_M(u))
  \le
  2\eta+\sqrt{2r_{\Psi_M}(p,u)}.
\end{equation}
The corresponding population guarantee is
\begin{equation}\label{eq:minhash-population-transfer}
  \Reg_{\Jac}(\operatorname{pred}_M\circ f)
  \le
  2\eta+\sqrt{2\Reg_{\Psi_M}(f)}.
\end{equation}
In particular, for every $\alpha>0$, taking $\eta=\alpha/2$ makes the pair
$\alpha$-approximately consistent.  With probability at least $1-\rho$, this
is achieved in dimension
\begin{equation}\label{eq:minhash-alpha-dimension}
  d_{\mathrm{MH}}=M(s+1),
  \qquad
  M=
  \left\lceil
    \frac{2\bigl(2s\log2+\log(2/\rho)\bigr)}{\alpha^2}
  \right\rceil,
\end{equation}
and hence
\[
  d_{\mathrm{MH}}
  =
  O\!\left(
    \frac{s^2+s\log(1/\rho)}{\alpha^2}
  \right).
\]
The probabilistic construction also proves the existence of a fixed
deterministic feature map with this guarantee.
\end{theorem}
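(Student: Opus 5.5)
The plan is the standard ``mean estimation plus approximate linear decoding'' argument.

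Fix $p\in\Delta_N$ and set $\mu_p=\sum_A p_A\Phi(A)$. Expanding the square gives
\[
  \sum_A p_A\Psi_M(A,u)=\|u-\mu_p\|_2^2+\text{const}(p),
\]
so the infimum over $v$ is attained at $v=\mu_p$, and
\[
  r_{\Psi_M}(p,u)=\|u-\mu_p\|_2^2 .
\]
Because this minimizer is pointwise and measurable in $x$, it also justifies the decomposition~\eqref{eq:population-conditional-decomposition} for $\Psi_M$. Next, $\langle\mu_p,\Phi(B)\rangle=\sum_A p_A\widetilde S(A,B)$. On the event of Lemma~\ref{lem:uniform-minhash}, this differs from the true expected score $J_p(B):=\sum_A p_A\Jac(A,B)$ by at most $\eta$, uniformly in $B$.

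The conditional bound~\eqref{eq:minhash-conditional-transfer} then follows by comparison. Let $B^\star$ maximize $J_p$ and let $\hat B=\operatorname{pred}_M(u)$. Each feature satisfies $\|\Phi(B)\|_2=1$, since it consists of $M$ standard basis vectors scaled by $1/\sqrt M$. Cauchy--Schwarz therefore gives $|\langle u-\mu_p,\Phi(B)\rangle|\le\|u-\mu_p\|_2$ for every $B$. Chaining the estimates,
\[
  J_p(B^\star)\le\langle\mu_p,\Phi(B^\star)\rangle+\eta
  \le\langle u,\Phi(B^\star)\rangle+\|u-\mu_p\|+\eta
  \le\langle u,\Phi(\hat B)\rangle+\|u-\mu_p\|+\eta
  \le J_p(\hat B)+2\|u-\mu_p\|+2\eta .
\]
This yields $r_{\Jac}\le2\eta+2\sqrt{r_{\Psi_M}}$. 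To reach the sharper constant $\sqrt2$ in the statement, I would combine the two error terms more carefully. Write
\[
  \langle u-\mu_p,\Phi(B^\star)-\Phi(\hat B)\rangle\le\|u-\mu_p\|\,\|\Phi(B^\star)-\Phi(\hat B)\|,
\]
and use
\[
  \|\Phi(B^\star)-\Phi(\hat B)\|^2=2-2\widetilde S(B^\star,\hat B)\le2 ,
\]
so that the middle term is at most $\sqrt2\|u-\mu_p\|$. Getting this constant right is the only delicate step. The obvious route gives $2$, and the difference-vector trick gives $\sqrt2$.

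For the population bound~\eqref{eq:minhash-population-transfer}, I would apply the conditional inequality at $p_X$ and $u=f(X)$ and take expectations. Since the square root is concave, Jensen's inequality gives
\[
  \mathbb E\sqrt{2r_{\Psi_M}}\le\sqrt{2\,\mathbb E r_{\Psi_M}}=\sqrt{2\Reg_{\Psi_M}(f)} .
\]
For the remaining claims:
\begin{itemize}
  \item With $\eta=\alpha/2$, letting $\Reg_{\Psi_M}(f_n)\to0$ gives $\limsup_n\Reg_{\Jac}\le\alpha$, which is $\alpha$-approximate consistency.
  \item Substituting $\eta=\alpha/2$ into~\eqref{eq:minhash-sample-size} gives exactly the $M$ in~\eqref{eq:minhash-alpha-dimension}. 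Multiplying by $s+1$ gives the stated $O\bigl((s^2+s\log(1/\rho))/\alpha^2\bigr)$ dimension.
  \item Since the good event has positive probability $1-\rho>0$, some deterministic realization of the permutations lies in it, which proves the existence claim.
\end{itemize}
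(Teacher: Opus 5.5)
Your proposal is correct and follows essentially the same route as the paper. Both use the bias--variance identity $r_{\Psi_M}(p,u)=\|u-\mu_p\|_2^2$, the uniform $\eta$-approximation of expected scores, Cauchy--Schwarz on the difference vector with $\|\Phi(B)-\Phi(B')\|_2^2=2-2\widetilde S(B,B')\le2$ (valid since $\widetilde S\ge0$), and Jensen's inequality for the population bound; your preliminary constant-$2$ chain is simply a detour the paper skips.
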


\noindent
The proof is given in Appendix~\ref{app:proof-minhash-consistency}.

For the compressed construction, fix an integer $d\ge1$, draw independent
uniform permutations $\pi_1,\ldots,\pi_d$, and, independently of the
permutations and of one another, draw Rademacher variables
$\xi_{r,j}\in\{-1,1\}$ for $r\in[d]$ and
$j\in[s]\cup\{\bot\}$.  Define
\begin{equation}\label{eq:rademacher-feature-map}
  \Phi_\pm(A)
  =
  \frac1{\sqrt d}
  \bigl(
    \xi_{1,\overline m_{\pi_1}(A)},\ldots,
    \xi_{d,\overline m_{\pi_d}(A)}
  \bigr)
  \in\R^d,
\end{equation}
and set
\begin{align}
  \Psi_\pm(A,u)
  &=\|u-\Phi_\pm(A)\|_2^2,
  \label{eq:rademacher-square-surrogate}\\
  \operatorname{pred}_\pm(u)
  &\in\arg\max_{B\in\cY}\langle u,\Phi_\pm(B)\rangle,
  \label{eq:rademacher-link}
\end{align}
using any fixed rule to break ties.

\begin{corollary}[Rademacher-compressed approximate consistency]
\label{cor:rademacher-compression}
For every $\alpha>0$ and $0<\rho<1$, take the preceding random
surrogate--link pair with $d=d_\pm$, where
\begin{equation}\label{eq:rademacher-alpha-dimension}
  d_\pm
  =
  \left\lceil
    \frac{8\bigl(2s\log2+\log(2/\rho)\bigr)}{\alpha^2}
  \right\rceil
\end{equation}
Then, with probability at least $1-\rho$ over its feature construction,
simultaneously for every distribution $\mathcal D$ and every measurable $f$,
\begin{equation}\label{eq:rademacher-population-transfer}
  \Reg_{\Jac}(\operatorname{pred}_\pm\circ f)
  \le
  \alpha+2\sqrt{\Reg_{\Psi_\pm}(f)}.
\end{equation}
Thus the pair is $\alpha$-approximately consistent in dimension
$O((s+\log(1/\rho))/\alpha^2)$.  Fixing $\rho=1/2$ and any successful
realization proves the deterministic existence of such a pair in dimension
$O(s/\alpha^2)$.
\end{corollary}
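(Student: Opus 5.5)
The proof follows the same three-stage pattern as Theorem~\ref{thm:minhash-approximate-calibration}: an approximate kernel identity, a square-loss regret transfer, and a union bound over pairs. The difference is that the signed features do not approximate $\Jac$ directly, only in expectation over the random signs and permutations. For each coordinate $r$,
\[
  \mathbb E\bigl[\xi_{r,\overline m_{\pi_r}(A)}\,\xi_{r,\overline m_{\pi_r}(B)}\bigr]
  =\Pr_\pi\bigl(\overline m_{\pi}(A)=\overline m_{\pi}(B)\bigr)=\Jac(A,B),
\]
by independence of the Rademacher signs and by~\eqref{eq:extended-minhash}. Here distinct symbols give expectation $0$ conditional on $\pi_r$, and equal symbols give $\xi^2=1$. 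Hence
\[
  \widetilde S_\pm(A,B)=\langle\Phi_\pm(A),\Phi_\pm(B)\rangle
\]
is an average of $d$ i.i.d.\ $\{-1,1\}$-valued variables with mean $\Jac(A,B)$. These summands have range $2$, so Hoeffding gives $2\exp(-d\eta^2/2)$ per pair, and a union bound over the $4^s$ pairs $(A,B)$ gives
\[
  \Pr\Bigl(\max_{A,B}|\widetilde S_\pm(A,B)-\Jac(A,B)|>\eta\Bigr)\le 2\,4^s e^{-d\eta^2/2}.
\]
With $\eta=\alpha/2$, this is at most $\rho$ exactly when $d\ge 8(2s\log2+\log(2/\rho))/\alpha^2$, which matches~\eqref{eq:rademacher-alpha-dimension}.

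On this event I then rerun the deterministic transfer from the proof of Theorem~\ref{thm:minhash-approximate-calibration}, which only used the uniform approximation of the Gram matrix. Fix $p$ and let $\bar\Phi=\sum_A p_A\Phi_\pm(A)$. Since $r_{\Psi_\pm}(p,u)=\|u-\bar\Phi\|^2$, the expected true score of $B$ is
\[
  \langle\bar\Phi,\Phi_\pm(B)\rangle
  =\sum_A p_A\,\widetilde S_\pm(A,B)\in\sum_A p_A\Jac(A,B)\pm\eta .
\]
Let $B$ be the link output and $C$ a Jaccard-optimal report. Then
\[
  r_{\Jac}(p,B)\le 2\eta+\langle\bar\Phi,\Phi_\pm(C)-\Phi_\pm(B)\rangle .
\]
Because $B$ maximizes $\langle u,\Phi_\pm(\cdot)\rangle$, the inner product is at most $\langle\bar\Phi-u,\Phi_\pm(C)-\Phi_\pm(B)\rangle$. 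Each $\Phi_\pm(\cdot)$ has norm exactly $1$, since every coordinate is $\pm1/\sqrt d$. By Cauchy--Schwarz the inner product is therefore at most $2\sqrt{r_{\Psi_\pm}(p,u)}$; the crude bound $\|\Phi_\pm(C)-\Phi_\pm(B)\|\le2$ is enough to give the stated constant $2$. This yields
\[
  r_{\Jac}(p,B)\le\alpha+2\sqrt{r_{\Psi_\pm}(p,u)} .
\]

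To pass to the population statement, I use the pointwise decomposition~\eqref{eq:population-conditional-decomposition}, which holds for square loss because the minimizer is the conditional mean. Taking expectations and applying Jensen to the concave square root gives~\eqref{eq:rademacher-population-transfer}. The event depends only on the feature randomness, so the guarantee holds simultaneously for every $\mathcal D$ and every $f$. Letting $\Reg_{\Psi_\pm}(f_n)\to0$ gives $\alpha$-approximate consistency, and $d_\pm=O((s+\log(1/\rho))/\alpha^2)$. Taking $\rho=1/2$ makes the success probability positive, so some realization works, which gives deterministic existence in dimension $O(s/\alpha^2)$.

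The only delicate step is the sign/permutation expectation identity. It requires independent signs for each coordinate $r$ and each symbol $j$, including $\bot$, so that the empty set is handled consistently with $\Jac(\varnothing,\varnothing)=1$. Once that identity is in place, the rest follows the argument for Theorem~\ref{thm:minhash-approximate-calibration}.
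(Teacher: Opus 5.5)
Your proposal is correct and follows the paper's proof essentially step for step. It uses the same conditional sign-product identity, Hoeffding with range $2$ plus a union bound over the $4^s$ pairs (which gives the constant $8$ in $d_\pm$ at $\eta=\alpha/2$), and a rerun of the transfer argument from Theorem~\ref{thm:minhash-approximate-calibration} with the crude bound $\|\Phi_\pm(B)-\Phi_\pm(B')\|_2\le 2$, which is why the constant in front of the square root becomes $2$.
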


\noindent
The proof is given in Appendix~\ref{app:proof-rademacher}.

\begin{remark}[Prediction dimension versus decoding]
\label{rem:minhash-decoding}
The MinHash construction has polynomial prediction dimension, but the exact
link in~\eqref{eq:minhash-link} still maximizes over all $2^s$ reports.  No
polynomial-time decoding claim is implicit in
Theorem~\ref{thm:minhash-approximate-calibration}.  If a decoder returns
$\widehat B$ satisfying
\[
  \langle u,\Phi(\widehat B)\rangle
  \ge
  \max_{B\in\cY}\langle u,\Phi(B)\rangle-\tau,
\]
the argument in Appendix~\ref{app:proof-minhash-consistency} adds only $\tau$
to the regret bounds.  The same argument applies to the signed construction.
If a signed-feature decoder returns $\widehat B$ satisfying
\[
  \langle u,\Phi_\pm(\widehat B)\rangle
  \ge
  \max_{B\in\cY}\langle u,\Phi_\pm(B)\rangle-\tau,
\]
then, on the uniform-approximation event used in Corollary
\ref{cor:rademacher-compression},
\[
  r_{\Jac}(p,\widehat B)
  \le
  2\eta+\tau+2\sqrt{r_{\Psi_\pm}(p,u)}.
\]
Consequently, with the choice $\eta=\alpha/2$ in that corollary, the
population bound for the corresponding approximate link
$\widehat{\operatorname{pred}}_\pm$ becomes
\[
  \Reg_{\Jac}(\widehat{\operatorname{pred}}_\pm\circ f)
  \le
  \alpha+\tau+2\sqrt{\Reg_{\Psi_\pm}(f)}.
\]
Thus a fixed decoding error gives $(\alpha+\tau)$-approximate consistency;
to retain an overall floor $\alpha$, choose the kernel-approximation and
decoding budgets so that $2\eta+\tau\le\alpha$.
In contrast, the $F_1$ proxy of
\citet{zhang2020convex} has polynomial-time decoding but, after the transfer in
Proposition~\ref{prop:f1-jaccard-transfer}, retains the nonzero worst-case floor
$c_\star$.
\end{remark}

\begin{remark}[Alternative empty-set convention]
\label{rem:approximate-alternative-empty}
If $\Jac(\varnothing,\varnothing)=0$, define a convention-matched $F_1$ score
with $F(\varnothing,\varnothing)=0$.  Then
\eqref{eq:jaccard-f-transform} and Proposition
\ref{prop:f1-jaccard-transfer} remain valid; any surrogate calibrated for this
convention-matched score inherits the same transfer.  For the MinHash
construction, set $\Phi(\varnothing)=0$ and use the ordinary nonempty-set
features in~\eqref{eq:minhash-feature-map}.
All feature norms are then at most one, and the concentration and regret
bounds remain unchanged; the direct feature dimension becomes $Ms$.  The
signed construction likewise remains valid after setting
$\Phi_\pm(\varnothing)=0$.
\end{remark}

For every fixed $\alpha>0$, the MinHash constructions have polynomial
dimension.  There is no contradiction with
Theorem~\ref{thm:ccdim-bounds}: that theorem concerns the zero-floor case
$\alpha=0$, whereas the dimensions above grow as the requested approximation
floor tends to zero.

\section{Conclusion}

The instance-wise multi-label Jaccard loss has maximal matrix rank and affine
dimension.  A finite MinHash Gram representation and Boolean M\"obius inversion
establish the rank results, while a factorially weighted family with a trivial
two-sided feasible subspace yields
$2^{s-1}\le\CCdim(L^{\mathrm{Jac}})\le2^s-1$.  Thus exact calibration requires
$\Theta(2^s)$ prediction coordinates, although the precise dimension remains
open within a factor of two.

Approximation exposes an exactness--dimension tradeoff.  Our $F_1$ transfer,
combined with the polynomial-time decoder of \citet{zhang2020convex}, gives
asymptotic Jaccard regret at most $3-2\sqrt2$.  For any $\alpha>0$, MinHash
square-loss surrogates attain regret floor $\alpha$ in polynomial dimension;
a signed variant achieves $O((s+\log(1/\rho))/\alpha^2)$ dimensions with
probability at least $1-\rho$.  These prediction-dimension results do not
resolve inference complexity: the MinHash link may still search over $2^s$
reports.  Natural next steps are efficient links with arbitrarily small floors,
matching approximate-dimension lower bounds, and closing the factor-of-two
gap in $\CCdim(L^{\mathrm{Jac}})$.

\appendix

\clearpage
\section{Proofs for the exact-rank results}

\subsection{Proof of Lemma~\ref{lem:jaccard-pd}}\label{app:proof-pd}

\begin{proof}[Proof of Lemma~\ref{lem:jaccard-pd}]
For a permutation $\pi$ of $[s]$ and a nonempty $A\subseteq[s]$, let
$m_\pi(A)$ be the first element of $A$ under $\pi$.  The MinHash identity
\citep{broder1997resemblance} states that, for nonempty $A,B$,
\begin{equation}\label{eq:minhash}
  \Jac(A,B)
  =\Pr_\pi\bigl(m_\pi(A)=m_\pi(B)\bigr).
\end{equation}
Indeed, the first element of $A\cup B$ under a uniformly random permutation is
uniform on $A\cup B$, and the two minima agree exactly when that element lies
in $A\cap B$.

For every permutation $\pi$ and $j\in[s]$, define the column vector
\[
  h_{\pi,j}
  :=\bigl(\mathbf{1}\{m_\pi(A)=j\}\bigr)_{A\in\cY_+}
  \in\R^{\cY_+}\cong\R^{N-1}.
\]
Thus $h_{\pi,j}h_{\pi,j}^\top$ is an $(N-1)\times(N-1)$ matrix indexed by
pairs of nonempty sets.  For every $A,B\in\cY_+$,
\begin{align*}
  \left[
    \frac{1}{s!}\sum_\pi\sum_{j=1}^s
      h_{\pi,j}h_{\pi,j}^\top
  \right]_{A,B}
  &=\frac{1}{s!}\sum_\pi\sum_{j=1}^s
    \mathbf{1}\{m_\pi(A)=j\}\mathbf{1}\{m_\pi(B)=j\}\\
  &=\Pr_\pi\bigl(m_\pi(A)=m_\pi(B)\bigr)
   =\Jac(A,B)=K_{A,B},
\end{align*}
where the sum over $j$ is the indicator that the two MinHashes agree.
Consequently, Equation~\eqref{eq:minhash} gives the Gram representation
\[
  K
  =\frac{1}{s!}\sum_\pi\sum_{j=1}^s
      h_{\pi,j}h_{\pi,j}^\top,
\]
Indeed, for every $x\in\R^{\cY_+}$,
\[
  x^\top Kx
  =\frac{1}{s!}\sum_\pi\sum_{j=1}^s
    x^\top h_{\pi,j}h_{\pi,j}^\top x
  =\frac{1}{s!}\sum_\pi\sum_{j=1}^s
    \bigl(h_{\pi,j}^\top x\bigr)^2
  \ge0,
\]
so $K\succeq0$.  If $x^\top Kx=0$, this finite sum of nonnegative squares is
zero, and therefore $h_{\pi,j}^\top x=0$ for every $\pi$ and $j$.  By the
definition of $h_{\pi,j}$,
\[
  h_{\pi,j}^\top x
  =\sum_{A\in\cY_+}\mathbf{1}\{m_\pi(A)=j\}x_A
  =\sum_{A:m_\pi(A)=j}x_A.
\]
Consequently,
\begin{equation}\label{eq:minhash-null}
  \sum_{A:m_\pi(A)=j}x_A=0
  \qquad\text{for every $\pi$ and $j$}.
\end{equation}

Fix $j$ and any $T\subseteq[s]\setminus\{j\}$.  Choose a permutation in
which every element of $[s]\setminus(T\cup\{j\})$ precedes $j$ and every
element of $T$ follows $j$.  The sets whose first element is $j$ are exactly
$\{j\}\cup R$ with $R\subseteq T$.  Hence~\eqref{eq:minhash-null} becomes
\[
  \sum_{R\subseteq T}x_{\{j\}\cup R}=0
  \qquad\text{for every }T\subseteq[s]\setminus\{j\}.
\]
Boolean M\"obius inversion gives $x_{\{j\}\cup T}=0$ for every $T$.
Every nonempty set contains some $j$, so $x=0$.  Therefore $K\succ0$.
\end{proof}

\subsection{Proof of Theorem~\ref{thm:exact-rank}}\label{app:proof-rank}

\begin{proof}[Proof of Theorem~\ref{thm:exact-rank}]
Order the empty set first.  Since its score with a nonempty set is zero,
\[
  S=
  \begin{pmatrix}
    1&0\\
    0&K
  \end{pmatrix}.
\]
Lemma~\ref{lem:jaccard-pd} makes $S$ nonsingular, so
$\rank(S)=\rank(L-U)=N$.

Let $e\in\R^{N-1}$ be the indicator of the singleton subsets.  For every
nonempty $A$,
\[
  (Ke)_A
  =\sum_{j=1}^s\Jac(A,\{j\})
  =\sum_{j\in A}\frac{1}{|A|}
  =1.
\]
Thus $Ke=\one_{N-1}$ and $\one_{N-1}^\top e=s$.

The loss has block form
\begin{equation}\label{eq:loss-block}
  L=
  \begin{pmatrix}
    0&\one_{N-1}^\top\\
    \one_{N-1}&
      \one_{N-1}\one_{N-1}^\top-K
  \end{pmatrix}.
\end{equation}
Suppose $L(\alpha,x)^\top=0$.  The first block equation gives
$\one_{N-1}^\top x=0$.  The second then reduces to
$Kx=\alpha\one_{N-1}$.  Since $K$ is invertible and $Ke=\one_{N-1}$,
$x=\alpha e$.  Therefore
$0=\one_{N-1}^\top x=\alpha s$, so $\alpha=0$ and $x=0$.  Hence
$\rank(L)=N$.

The $N$ score columns are linearly independent, and therefore affinely
independent, so $\affdim(S)=N-1$.  Applying the invertible affine map
$z\mapsto\one_N-z$ to each score column gives the corresponding loss column.
Thus $\affdim(L)=N-1$.
\end{proof}

\section{Proof of the factorial balancing identity}
\label{app:factorial-balance}

\begin{proof}[Proof of Lemma~\ref{lem:factorial-balance}]
Put $c=|C|$.  Every $D\subseteq[n]$ has the unique decomposition
\[
  D=I\sqcup E,
  \qquad
  I=D\cap C\subseteq C,
  \qquad
  E=D\setminus C\subseteq[n]\setminus C.
\]
Let $i=|I|$ and $j=|E|$.  For this decomposition,
\[
  |C\cap D|=i,
  \qquad
  |C\cup D|=|C|+|E|=c+j,
  \qquad
  |D|=i+j.
\]
For fixed $i$ and $j$, there are $\binom ci$ choices for $I$ and
$\binom{n-c}{j}$ choices for $E$.  Consequently, the left-hand side
of~\eqref{eq:factorial-identity} becomes
\[
  \sum_{j=0}^{n-c}\sum_{i=0}^c
    \binom{n-c}{j}\binom ci
    \frac{i+1}{(c+j+1)(i+j)!}.
\]
Factoring the terms that do not depend on $i$ gives
\begin{equation}\label{eq:factorial-double-sum}
  \sum_{j=0}^{n-c}\binom{n-c}{j}\frac{1}{c+j+1}
  \sum_{i=0}^c\binom ci\frac{i+1}{(i+j)!}.
\end{equation}
We claim that
\begin{equation}\label{eq:factorial-inner}
  \frac{1}{c+j+1}
  \sum_{i=0}^c\binom ci\frac{i+1}{(i+j)!}
  =
  \sum_{r=0}^c\binom cr\frac{1}{(j+r+1)!}.
\end{equation}
To verify the claim, multiply its right-hand side by $c+j+1$ and use
$c+j+1=(j+r+1)+(c-r)$ in each summand.  This gives
\begin{align*}
 &(c+j+1)\sum_{r=0}^c\binom cr\frac{1}{(j+r+1)!} \\
 &\quad=
 \sum_{r=0}^c\binom cr\frac{j+r+1}{(j+r+1)!}
 +\sum_{r=0}^c\binom cr\frac{c-r}{(j+r+1)!} \\
 &\quad=
 \sum_{r=0}^c\binom cr\frac{1}{(j+r)!}
 +\sum_{r=0}^{c-1}\binom cr\frac{c-r}{(j+r+1)!}.
\end{align*}
In the second sum, the $r=c$ term is zero.  Using
\[
  (c-r)\binom cr=(r+1)\binom c{r+1}
\]
and then setting $i=r+1$, we obtain
\[
  \sum_{r=0}^{c-1}\binom cr\frac{c-r}{(j+r+1)!}
  =
  \sum_{i=1}^c i\binom ci\frac{1}{(j+i)!}.
\]
Renaming $r$ as $i$ in the first sum and combining the two sums therefore
yields
\[
  \sum_{i=0}^c\binom ci\frac{1}{(j+i)!}
  +\sum_{i=1}^c i\binom ci\frac{1}{(j+i)!}
  =
  \sum_{i=0}^c\binom ci\frac{i+1}{(j+i)!}.
\]
This is the numerator on the left-hand side
of~\eqref{eq:factorial-inner}; dividing by $c+j+1$ proves the claim.

Substituting~\eqref{eq:factorial-inner} into
\eqref{eq:factorial-double-sum} yields
\[
  \sum_{j=0}^{n-c}\sum_{r=0}^c
    \binom{n-c}{j}\binom cr\frac{1}{(j+r+1)!}.
\]
Set $d=j+r$.  Since $0\le j\le n-c$ and $0\le r\le c$, the new index
runs from $0$ to $n$.  For fixed $d$, the denominator is $(d+1)!$, while
the sum of the binomial coefficients over all admissible pairs
$(j,r)$ is
\[
  \sum_{\substack{0\le r\le c\\0\le d-r\le n-c}}
    \binom cr\binom{n-c}{d-r}
  =\binom nd
\]
by Vandermonde's identity.  Hence the preceding double sum equals
\[
  \sum_{d=0}^n\frac{1}{(d+1)!}
  \sum_{\substack{0\le r\le c\\0\le d-r\le n-c}}
    \binom cr\binom{n-c}{d-r}
  =
  \sum_{d=0}^n\binom nd\frac{1}{(d+1)!},
\]
which is the right-hand side of~\eqref{eq:factorial-identity}.
\end{proof}

\section{Proof of the exponential lower bound}
\label{app:proof-ccdim}

\begin{proof}[Proof of Theorem~\ref{thm:ccdim-bounds}]
The upper bound is Corollary~\ref{cor:upper}.  We prove the lower bound.

\paragraph{Step 1: the factorial distribution.}
Fix the core label $1$, let $O=[s]\setminus\{1\}$ and $n=s-1$, and set
\[
  \cU=\{\{1\}\cup D:D\subseteq O\}.
\]
Define
\[
  Z_n=\sum_{d=0}^n\binom nd\frac{1}{d!},
  \qquad
  H_n=\sum_{d=0}^n\binom nd\frac{1}{(d+1)!},
  \qquad
  \kappa=\frac{H_n}{Z_n}>0.
\]
Let $q$ be supported on $\cU$ with
\begin{equation}\label{eq:q-factorial}
  q_{\{1\}\cup D}=\frac{1}{Z_n|D|!},
  \qquad D\subseteq O.
\end{equation}
Every coordinate in this support is positive.

For $C\subseteq O$, let $A\sim q$ and write $A=\{1\}\cup D$.
Lemma~\ref{lem:factorial-balance} gives
\begin{align*}
  \mathbb E_{A\sim q}\!\left[
    \Jac(A,\{1\}\cup C)
  \right]
  &=\frac1{Z_n}\sum_{D\subseteq O}
    \frac{1+|C\cap D|}{(1+|C\cup D|)|D|!}\\
  &=\frac{H_n}{Z_n}
  =\kappa.
\end{align*}
Thus all reports in $\cU$ tie under $q$.

If $B\ne\varnothing$ and $1\notin B$, then for every outcome
$A\in\cU$,
\begin{equation}\label{eq:core-dominance}
  \Jac(A,B\cup\{1\})-\Jac(A,B)
  =\frac{1}{|A\cup B|}>0.
\end{equation}
Hence $B$ is strictly suboptimal under $q$.  The empty report has score zero,
whereas $\kappa>0$.  Therefore
\begin{equation}\label{eq:q-opt}
  \operatorname{opt}_L(q)=\cU.
\end{equation}

\paragraph{Step 2: tie the empty report.}
Define
\begin{equation}\label{eq:p-witness}
  p
  =\frac{\kappa}{1+\kappa}\,\delta_{\varnothing}
   +\frac{1}{1+\kappa}\,q.
\end{equation}
The empty report has expected score $\kappa/(1+\kappa)$.  Every report in
$\cU$ has the same score because its score on the empty outcome is zero and
its score under $q$ is $\kappa$.  Equations~\eqref{eq:q-opt} and
\eqref{eq:core-dominance} show that all remaining reports have strictly lower
score.  Thus
\begin{equation}\label{eq:p-active}
  \operatorname{opt}_L(p)
  =\cA:=\{\varnothing\}\cup\cU,
  \qquad
  \operatorname{supp}(p)=\cA,
  \qquad
  |\cA|=2^{s-1}+1.
\end{equation}

\paragraph{Step 3: active-column dimension.}
The score submatrix with rows and columns in $\cA$ is
\[
  S_{\cA,\cA}
  =
  \begin{pmatrix}
    1&0\\
    0&S_{\cU,\cU}
  \end{pmatrix}.
\]
The lower-right block is a principal submatrix of the positive-definite matrix
in Lemma~\ref{lem:jaccard-pd}; hence $S_{\cA,\cA}$ is nonsingular.  Its
$|\cA|$ columns are linearly independent and therefore have affine dimension
$|\cA|-1$.

Fix $B_0=\varnothing$ and define the active loss-difference space
\[
  E
  =\operatorname{span}\left\{
    L_{\cA,B}-L_{\cA,B_0}:B\in\cA
  \right\}
  \subseteq\R^{\cA}.
\]
Loss differences are the negatives of score differences, so
\begin{equation}\label{eq:E-dimension}
  \dim E=|\cA|-1=2^{s-1}.
\end{equation}
All reports in $\cA$ tie at $p$, so $p_{\cA}^\top z=0$ for every $z\in E$.
Since $p_{\cA}$ is nonzero and~\eqref{eq:E-dimension} has codimension one,
\begin{equation}\label{eq:E-hyperplane}
  E=p_{\cA}^{\perp}.
\end{equation}

\paragraph{Step 4: the two-sided feasible subspace.}
At $p$, the reports in $\cA$ are exactly tied and every other report has a
strict risk gap.  It follows directly from the simplex constraints and the
active trigger inequalities that
\begin{equation}\label{eq:lineality-characterization}
  \operatorname{lin}_{Q_{B_0}^L}(p)
  =\left\{v\in\R^N:
    v_{\cA^c}=0,\quad
    \one_{\cA}^\top v_{\cA}=0,\quad
    v_{\cA}\perp E
  \right\}.
\end{equation}
Indeed, two-sided feasibility forces zero motion on the zero-probability
coordinates, preserves total mass, and turns every active comparison into an
equality.  Conversely, any vector satisfying these conditions remains in the
simplex and preserves the active equalities for sufficiently small motion in
either direction; the strict inactive gaps remain strict.

By~\eqref{eq:E-hyperplane}, the last condition in
\eqref{eq:lineality-characterization} gives
$v_{\cA}=a p_{\cA}$ for some scalar $a$.  Normalization gives
$0=\one_{\cA}^\top v_{\cA}=a$, because
$\one_{\cA}^\top p_{\cA}=1$.  Thus the lineality space is $\{0\}$ and
\[
  \mu_{Q_{B_0}^L}(p)=0.
\]
Applying~\eqref{eq:ccdim-lower-general} and~\eqref{eq:p-active} gives
\[
  \CCdim(L^{\mathrm{Jac}})
  \ge |\cA|-0-1
  =2^{s-1}.
\]
\end{proof}

\section{Proofs for the polynomial-dimensional approximation results}
\label{app:approximate-surrogate-proofs}

\subsection{Proof of Proposition~\ref{prop:f1-jaccard-transfer}}
\label{app:proof-f1-transfer}

\begin{proof}[Proof of Proposition~\ref{prop:f1-jaccard-transfer}]
Fix $p\in\Delta_N$ and $B\in\cY$.  Let $A$ be a $\cY$-valued random
variable with $\Pr(A=D)=p_D$, and write
\[
  \mathbb E_p[\varphi(A)]
  :=\sum_{D\in\cY}p_D\varphi(D)
\]
for any function $\varphi:\cY\to\R$.  Define the optimal conditional
$F_1$ score and the conditional $F_1$ regret of $B$ by
\[
  \delta=\max_{C\in\cY}\mathbb E_p[F(A,C)],
  \qquad
  r=r_F(p,B),
\]
respectively.  By the definition of $r_F$ in
\eqref{eq:conditional-score-regret},
\[
  \mathbb E_p[F(A,B)]=\delta-r.
\]
Moreover, $0\le r\le\delta\le1$, since $F$ takes values in $[0,1]$.
Because $g(t)\le t$ on $[0,1]$,
\[
  \max_{C\in\cY}\mathbb E_p[\Jac(A,C)]
  \le
  \max_{C\in\cY}\mathbb E_p[F(A,C)]
  =\delta.
\]
The function $g$ is convex.  Jensen's inequality and
\eqref{eq:jaccard-f-transform} therefore give
\[
  \mathbb E_p[\Jac(A,B)]
  =\mathbb E_p[g(F(A,B))]
  \ge g\bigl(\mathbb E_p[F(A,B)]\bigr)
  =g(\delta-r).
\]
Consequently,
\begin{equation}\label{eq:f1-jaccard-conditional-transfer}
  r_{\Jac}(p,B)
  \le \delta-g(\delta-r).
\end{equation}
Set
\[
  z=\delta-r=\mathbb E_p[F(A,B)].
\]
Then $z\ge0$, and $z+r=\delta\le1$, so $z\in[0,1-r]$.  The right-hand
side of~\eqref{eq:f1-jaccard-conditional-transfer} becomes
\[
  r+z-g(z)
  =
  r+\frac{z(1-z)}{2-z}.
\]
Let
\[
  a(z):=z-g(z)=\frac{z(1-z)}{2-z}.
\]
The function $a$ increases on $[0,z_\star]$ and decreases on
$[z_\star,1]$, where $z_\star=2-\sqrt2$ and
$a(z_\star)=c_\star$.  Since the actual value of $z$ lies in
$[0,1-r]$, we have
\[
  r+a(z)
  \le r+\max_{0\le u\le1-r}a(u)
  =H(r),
\]
with $H$ as defined in~\eqref{eq:f1-jaccard-transfer-function}: if
$r\le\sqrt2-1$, the interval contains $z_\star$ and the maximum is
$r+c_\star$; otherwise the maximum occurs at $u=1-r$ and equals
\[
  r+a(1-r)=\frac{2r}{1+r}.
\]
It follows that
\[
  r_{\Jac}(p,B)\le H\bigl(r_F(p,B)\bigr).
\]

It remains to pass from conditional to population regret.  Let $(X,A)$ have
distribution $\mathcal D$, let $p_X$ denote the conditional law of $A$ given
$X$, and let $h$ be the classifier in the proposition.  Define the random
variable
\[
  R_X:=r_F(p_X,h(X))\in[0,1].
\]
The two pieces of $H$ agree in both value and first derivative at
$r=\sqrt2-1$, and the second piece is increasing and concave.  Hence $H$ is
increasing and concave on $[0,1]$.  Applying the conditional inequality with
$p=p_X$ and $B=h(X)$, then using Jensen's inequality and the population-regret
definition~\eqref{eq:population-score-regret}, gives
\begin{align*}
  \Reg_{\Jac}(h)
  &=\mathbb E_X\bigl[r_{\Jac}(p_X,h(X))\bigr]\\
  &\le\mathbb E_X[H(R_X)]\\
  &\le H(\mathbb E_X[R_X])
   =H(\Reg_F(h)).
\end{align*}
Finally, $a(z)\le c_\star$ for every $z\in[0,1]$, so the representation
$H(r)=r+\max_{0\le u\le1-r}a(u)$ gives
$H(r)\le r+c_\star$.  This proves the second inequality in
\eqref{eq:f1-jaccard-global-transfer}.  In particular, if $h$ is
$F_1$-Bayes, then $\Reg_F(h)=0$ and the first inequality gives
$\Reg_{\Jac}(h)\le H(0)=c_\star$.
\end{proof}

\subsection{Proof of Lemma~\ref{lem:uniform-minhash}}
\label{app:proof-minhash-uniform}

\begin{proof}[Proof of Lemma~\ref{lem:uniform-minhash}]
For fixed $A,B$, the summands in
\eqref{eq:empirical-minhash-score} are independent Bernoulli random variables
with mean $\Jac(A,B)$ by~\eqref{eq:extended-minhash}.  Hoeffding's inequality
gives
\[
  \Pr\!\left(
    |\widetilde S(A,B)-\Jac(A,B)|>\eta
  \right)
  \le2\exp(-2M\eta^2).
\]
There are $|\cY|^2=4^s$ ordered outcome--report pairs.  A union bound proves
\eqref{eq:minhash-uniform-probability}, and solving its right-hand side for
$M$ gives~\eqref{eq:minhash-sample-size}.
\end{proof}

\subsection{Proof of Theorem~\ref{thm:minhash-approximate-calibration}}
\label{app:proof-minhash-consistency}

\begin{proof}[Proof of Theorem~\ref{thm:minhash-approximate-calibration}]
Fix a realization for which the uniform-approximation event in
Lemma~\ref{lem:uniform-minhash} holds.
For $p\in\Delta_N$, set
\[
  \mu_p=\sum_{A\in\cY}p_A\Phi(A),
  \qquad
  g_p(B)=\sum_{A\in\cY}p_A\Jac(A,B),
  \qquad
  \widetilde g_p(B)=\langle\mu_p,\Phi(B)\rangle.
\]
By the definitions of $\mu_p$ and $\widetilde S$,
\begin{equation}\label{eq:conditional-empirical-score}
  \widetilde g_p(B)
  =\sum_{A\in\cY}p_A
    \langle\Phi(A),\Phi(B)\rangle
  =\sum_{A\in\cY}p_A\widetilde S(A,B).
\end{equation}
On the uniform-approximation event,
\begin{equation}\label{eq:conditional-score-approximation}
\begin{aligned}
  |g_p(B)-\widetilde g_p(B)|
  &=
  \left|
    \sum_{A\in\cY}p_A
    \bigl(\Jac(A,B)-\widetilde S(A,B)\bigr)
  \right|\\
  &\le
  \sum_{A\in\cY}p_A
  |\Jac(A,B)-\widetilde S(A,B)|\\
  &\le \eta\sum_{A\in\cY}p_A
  =\eta
\end{aligned}
\end{equation}
for every $p$ and $B$.  Let $B^\star\in\arg\max_B g_p(B)$ and
$\widehat B=\operatorname{pred}_M(u)$.  Adding and subtracting the two
approximate scores gives
\begin{align*}
  r_{\Jac}(p,\widehat B)
  &=g_p(B^\star)-g_p(\widehat B)\\
  &=\bigl(g_p(B^\star)-\widetilde g_p(B^\star)\bigr)
    +\bigl(\widetilde g_p(B^\star)
           -\widetilde g_p(\widehat B)\bigr)
    +\bigl(\widetilde g_p(\widehat B)-g_p(\widehat B)\bigr)\\
  &\le 2\eta+\widetilde g_p(B^\star)
                    -\widetilde g_p(\widehat B)\\
  &=2\eta+
    \langle\mu_p,\Phi(B^\star)-\Phi(\widehat B)\rangle\\
  &=
  2\eta+
  \langle\mu_p-u,\Phi(B^\star)-\Phi(\widehat B)\rangle
  +
  \langle u,\Phi(B^\star)-\Phi(\widehat B)\rangle\\
  &\le
  2\eta+
  \|\mu_p-u\|_2
  \|\Phi(B^\star)-\Phi(\widehat B)\|_2.
\end{align*}
Here
\[
  \langle u,\Phi(B^\star)-\Phi(\widehat B)\rangle\le0
\]
because $\widehat B$ maximizes $B\mapsto\langle u,\Phi(B)\rangle$; the final
line then follows from Cauchy--Schwarz.  Moreover, each of the $M$ blocks in
$\Phi(B)$ is a scaled standard basis vector, so
\[
  \|\Phi(B)\|_2^2
  =\frac1M\sum_{r=1}^M\|e_{\overline m_{\pi_r}(B)}\|_2^2
  =1.
\]
All feature coordinates are nonnegative, and hence
$\langle\Phi(B),\Phi(B')\rangle\ge0$.  It follows that
\[
  \|\Phi(B)-\Phi(B')\|_2^2
  =\|\Phi(B)\|_2^2+\|\Phi(B')\|_2^2
   -2\langle\Phi(B),\Phi(B')\rangle
  =2-2\langle\Phi(B),\Phi(B')\rangle
  \le2.
\]
Combining the last two displays yields
\begin{equation}\label{eq:minhash-transfer-distance}
  r_{\Jac}(p,\widehat B)
  \le2\eta+\sqrt2\,\|u-\mu_p\|_2.
\end{equation}

It remains to identify the distance on the right with square-loss regret.
For every $A$,
\[
  u-\Phi(A)=(u-\mu_p)+(\mu_p-\Phi(A)).
\]
After squaring and averaging, the cross term vanishes because
\[
  \sum_Ap_A(\mu_p-\Phi(A))
  =\mu_p-\sum_Ap_A\Phi(A)=0.
\]
Therefore the square-loss bias--variance identity is
\begin{equation}\label{eq:minhash-square-excess}
\begin{aligned}
  \sum_Ap_A\|u-\Phi(A)\|_2^2
  &=\|u-\mu_p\|_2^2
    +\sum_Ap_A\|\Phi(A)-\mu_p\|_2^2.
\end{aligned}
\end{equation}
The second term is independent of $u$, so the conditional surrogate risk is
minimized at $u=\mu_p$, and
\[
  r_{\Psi_M}(p,u)=\|u-\mu_p\|_2^2.
\]
Substitution into~\eqref{eq:minhash-transfer-distance} proves
\eqref{eq:minhash-conditional-transfer}.

Now apply the conditional bound at each $x$ with $p=p_x$ and $u=f(x)$.
Using~\eqref{eq:population-score-regret}, then Cauchy--Schwarz (equivalently,
Jensen's inequality for the square root), gives
\begin{align*}
  \Reg_{\Jac}(\operatorname{pred}_M\circ f)
  &=\mathbb E_X\!\left[
      r_{\Jac}(p_X,\operatorname{pred}_M(f(X)))
    \right]\\
  &\le2\eta+
    \sqrt2\,\mathbb E_X\!\left[
      \sqrt{r_{\Psi_M}(p_X,f(X))}
    \right]\\
  &\le2\eta+
    \sqrt{2\,\mathbb E_X[
      r_{\Psi_M}(p_X,f(X))]}\\
  &=2\eta+\sqrt{2\Reg_{\Psi_M}(f)},
\end{align*}
where the last equality uses
\eqref{eq:population-conditional-decomposition}.  This proves
\eqref{eq:minhash-population-transfer}.

Finally, set $\eta=\alpha/2$.  If
$\Reg_{\Psi_M}(f_n)\to0$, the population bound gives
\[
  \limsup_{n\to\infty}
  \Reg_{\Jac}(\operatorname{pred}_M\circ f_n)
  \le2\eta=\alpha,
\]
which is approximate consistency.  With this value of $\eta$,
\eqref{eq:minhash-sample-size} becomes
\[
  M\ge
  \frac{2\bigl(2s\log2+\log(2/\rho)\bigr)}{\alpha^2}.
\]
Taking the ceiling gives~\eqref{eq:minhash-alpha-dimension}; multiplying by
the block size $s+1$ gives the stated order for $d_{\mathrm{MH}}$.  Since the
uniform-approximation event has positive probability, at least one fixed
realization of the feature map satisfies all these bounds.
\end{proof}

\subsection{Proof of Corollary~\ref{cor:rademacher-compression}}
\label{app:proof-rademacher}

\begin{proof}[Proof of Corollary~\ref{cor:rademacher-compression}]
Take $d=d_\pm$ in~\eqref{eq:rademacher-feature-map}.  For every $A,B$,
$\mathbb E\langle\Phi_\pm(A),\Phi_\pm(B)\rangle=\Jac(A,B)$: conditional on
$\pi_r$, the sign product has expectation one when the two hashes agree and
zero otherwise.  The summands lie in $[-1,1]$, so Hoeffding's inequality and
the same union bound give uniform error $\eta$ with probability at least
$1-\rho$ provided
\[
  d_\pm
  \ge
  \frac{2\bigl(2s\log2+\log(2/\rho)\bigr)}{\eta^2}.
\]
The proof of Theorem~\ref{thm:minhash-approximate-calibration} applies to
\eqref{eq:rademacher-square-surrogate}--\eqref{eq:rademacher-link}, with
$\|\Phi_\pm(B)-\Phi_\pm(B')\|_2\le2$.  Taking $\eta=\alpha/2$ gives
\eqref{eq:rademacher-population-transfer}, while
\eqref{eq:rademacher-alpha-dimension} is exactly the preceding sample-size
condition with this choice of $\eta$.
\end{proof}

\section{The alternative empty-set convention}
\label{app:empty-convention}

Suppose $\Jac(\varnothing,\varnothing)=0$.  Ordering the empty set first,
the score matrix becomes $\operatorname{diag}(0,K)$.  Lemma~\ref{lem:jaccard-pd}
therefore gives
$\rank(S)=\rank(L-U)=N-1$.  The empty score column is zero and the other
$N-1$ columns are independent, so $\affdim(S)=N-1$ and hence
$\affdim(L)=N-1$.

The loss block is now
\[
  L=
  \begin{pmatrix}
    1&\one_{N-1}^\top\\
    \one_{N-1}&
      \one_{N-1}\one_{N-1}^\top-K
  \end{pmatrix}.
\]
If $L(\alpha,x)^\top=0$, the first equation is
$\alpha+\one_{N-1}^\top x=0$.  Substitution into the second gives $Kx=0$, so
$x=0$ and then $\alpha=0$.  Thus $\rank(L)=N$.

For the calibration lower bound, use the factorial distribution $q$ from
\eqref{eq:q-factorial} without mixing in the empty outcome.  Its support and
its exact Bayes-optimal report set are both $\cU$, of size $2^{s-1}$.  The
restricted active score matrix $S_{\cU,\cU}$ is positive definite.  Repeating
Steps 3--4 of the proof of Theorem~\ref{thm:ccdim-bounds} gives zero lineality
and hence the lower bound $2^{s-1}-1$.  The affine-dimension upper bound remains
$2^s-1$.

\end{document}